%% file: main_file.tex
\documentclass[sigconf]{acmart} 

\copyrightyear{2026}
\acmYear{2026}
\setcopyright{cc}
\setcctype{by}
\acmConference[KDD '26]{Proceedings of the 32nd ACM SIGKDD Conference on Knowledge Discovery and Data Mining V.2}{August 09--13, 2026}{Jeju Island, Republic of Korea}
\acmBooktitle{Proceedings of the 32nd ACM SIGKDD Conference on Knowledge Discovery and Data Mining V.2 (KDD '26), August 09--13, 2026, Jeju Island, Republic of Korea}
\acmDOI{10.1145/3770855.3817966}
\acmISBN{979-8-4007-2259-2/2026/08}

\usepackage{amsfonts}

\usepackage{amssymb}
\usepackage{bm}
\usepackage{nicefrac}
\usepackage{amsmath}
\usepackage{multirow}

\usepackage{subcaption} % Confirmed as APPROVED
\usepackage{enumitem}   % Confirmed as APPROVED

\usepackage{xspace}

\newcommand{\nop}[1]{}

\expandafter\def\expandafter\normalsize\expandafter{%
    \normalsize%
    \setlength\abovedisplayskip{0pt}%
    \setlength\belowdisplayskip{0pt}%
    \setlength\abovedisplayshortskip{0pt}%
    \setlength\belowdisplayshortskip{0pt}%
}

\input{others/packages}

\input{others/configs}

\begin{document}

% needs iteration:

\title[Gradient Orthogonal Low-Rank Adaptation Framework for Graph Continual Learning]{G$^2$LoRA: Gradient Orthogonal Low-Rank Adaptation Framework for Graph Continual Learning on Text-Attributed Graphs}
\input{others/authors}

\author{Yuhan Wang}
\affiliation{%
  \institution{School of Computer Science and
Engineering, Beihang University}
  \city{Beijing}
  \country{China}
}
\email{wyh7667@gmail.com}

\author{Yibo Ding}
\affiliation{%
  \institution{School of Computer Science and
Engineering, Beihang University}
  \city{Beijing}
  \country{China}
}
\email{dingyibo@buaa.edu.cn}

\author{Yutong Ye}
\affiliation{%
  \institution{School of Computer Science and
Engineering, Beihang University}
  \city{Beijing}
  \country{China}
}
\email{yutongye@buaa.edu.cn}

\author{Mufan Zhao}
\affiliation{%
  \institution{Department of Statistics, Columbia University}
  \city{New York}
  \state{NY}
  \country{USA}
}
\email{mz3069@columbia.edu}

\author{Wenbo Zhang}
\affiliation{%
  \institution{College of Computer Science, Beijing University of Technology}
  \city{Beijing}
  \country{China}
}
\email{zhangwenbo@bjut.edu.cn}

\author{Ruijie Wang}
\authornote{Corresponding author.}
\affiliation{%
  \institution{School of Computer Science and
Engineering, Beihang University}
  \city{Beijing}
  \country{China}
}
\email{ruijiew@buaa.edu.cn}

\author{Jianxin Li}
\affiliation{%
  \institution{School of Computer Science and
Engineering, Beihang University}
  \city{Beijing}
  \country{China}
}
\email{lijx@buaa.edu.cn}
\input{text/0_abstract}

\begin{CCSXML}
<ccs2012>
   <concept>
       <concept_id>10002951.10003227.10003351</concept_id>
       <concept_desc>Information systems~Data mining</concept_desc>
       <concept_significance>500</concept_significance>
       </concept>
 </ccs2012>
\end{CCSXML}

\ccsdesc[500]{Information systems~Data mining}

\keywords{Text-attributed graphs, Graph continual learning, LLM-as-Aligner, Low-rank adaptation, Graph-text alignment}

\maketitle

% \begingroup
% \small
% \noindent\raggedright\textbf{Resource Availability:}\\
% The source code of this paper has been made publicly available at
% \url{https://doi.org/10.5281/zenodo.20442900}.
% \par
% \endgroup

% The default list of authors is too long for headers.
% \renewcommand{\shortauthors}{xxx et al.}
\renewcommand{\shortauthors}{Wang et al.}
\input{text/1_intro}
\input{text/2_background}
\input{text/3_framework}
\input{text/5_experiment}
\input{text/6_related}

\input{text/8_conclusion}

\clearpage
\bibliographystyle{ACM-Reference-Format}
% \balance
\bibliography{reference}

% \clearpage
% \newpage
\balance
\input{text/Apennew}
%\newpage
%\input{text/changes}

\end{document}

%% file: others/packages.tex
\usepackage{balance}
\usepackage{mathrsfs}
\usepackage{amsmath,amssymb,amsfonts,amsthm}
\usepackage{enumitem} 
\usepackage{graphicx}
\usepackage{extarrows}
\usepackage{amssymb}
\usepackage{verbatim}
\usepackage{makecell}
\usepackage{diagbox}
\usepackage{bm}
\usepackage{multirow}
\usepackage{float}
\usepackage{placeins}
\usepackage[linesnumbered, ruled]{algorithm2e}
\SetAlFnt{\small}
\usepackage{epstopdf}
\usepackage{setspace}
\usepackage[super]{nth}
\usepackage{fancyhdr}
\usepackage{slashbox}
\usepackage{lipsum,multicol}
\usepackage{url}
\usepackage{xcolor}
\usepackage{booktabs}
\usepackage{diagbox}
\usepackage{caption}
\usepackage{ragged2e}
\usepackage{wrapfig}
\usepackage{tikz}
\usepackage{hyperref}
\usepackage{graphicx}
\usepackage[normalem]{ulem}

%% file: others/configs.tex
\def\BibTeX{{\rm B\kern-.05em{\sc i\kern-.025em b}\kern-.08em
    T\kern-.1667em\lower.7ex\hbox{E}\kern-.125emX}}

\SetCommentSty{mycommfont}

\newtheorem{theorem}{Theorem}

\newcommand{\model}{{G$^2$LoRA}\xspace}
\allowdisplaybreaks[4]

%% file: others/authors.tex
%\author{Shengzhong Liu}
%\affiliation{
% \institution{University of Illinois at Urbana-Champaign}
% \email{}{sl29@illinois.edu}}
% 
%\author{Tianshi Wang}
%\affiliation{
% \institution{University of Illinois at Urbana-Champaign}
% \email{}{tianshi3@illinois.edu}
%}
% 
%\author{Tarek Abdelzaher}
%\affiliation{
% \institution{University of Illinois at Urbana-Champaign}
% \email{}{zaher@illinois.edu}
%}

%% file: text/0_abstract.tex
\begin{abstract}

\nop{
LLM-as-Aligner is a widely used pre-training paradigm for Text-Attributed Graphs (TAGs), aligning graph and text modalities into a shared embedding space via CLIP-style contrastive learning. While it performs well on individual downstream tasks, we observe severe catastrophic forgetting during sequential fine-tuning on streaming tasks. Though parameter-efficient fine-tuning reduces some forgetting, it doesn't fully address task interference and transfer issues. In this paper, we explore continual learning for LLM-as-Aligner methods on TAGs, aiming to reduce interference and promote positive knowledge transfer between new and old tasks. This objective presents two key challenges that limit the effectiveness of existing graph continual learning methods: (1) diverse downstream tasks lead to fluctuating learning objectives, making it hard to design a unified fine-tuning strategy; and (2) graph and text modalities have different sensitivities to fine-tuning, requiring coordinated updates. To address these, we propose Gradient Orthogonal Low-Rank Adaptation (\model) for Graph Continual Learning, which unifies optimization through a graph-text alignment loss applied to node-, link-, and graph-level tasks, enabling a single fine-tuning objective. To further reduce task interference while promoting positive knowledge transfer (Challenge 1), \model adaptively controls gradient projection at the category-level subspace, resolving task conflicts and enabling conditional backward transfer to balance forward and backward knowledge flow. To prevent misalignment and drift (Challenge 2), \model incorporates gradient magnitude modulation to coordinate update rates between graph and text encoders. Through extensive experiments, \model outperforms different backbone models across benchmark datasets, demonstrating strong performance and transferability. 
Our code and sample data are attached for anonymous review: \url{https://anonymous.4open.science/r/G2LoRA-4FF7}.
}

LLM-as-Aligner has emerged as a prevalent pre-training paradigm for Text-Attributed Graphs (TAGs), aligning graph and text modalities into a shared embedding space via CLIP-style contrastive learning. While effective on individual downstream tasks, we observe severe catastrophic forgetting when such models are sequentially fine-tuned on streaming tasks. Although parameter-efficient fine-tuning alleviates forgetting to some extent, it remains insufficient to resolve task interference and ineffective knowledge transfer.
In this work, we study graph continual learning for LLM-as-Aligner models on TAGs, with the goal of mitigating interference while promoting positive transfer across tasks. This setting introduces two fundamental challenges: (1) heterogeneous downstream tasks induce shifting optimization objectives, hindering unified fine-tuning; and (2) graph and text encoders exhibit different sensitivities to adaptation, making uncoordinated updates prone to misalignment.
To address these challenges, we propose Gradient Orthogonal Low-Rank Adaptation (\model), a continual learning framework for TAGs. \model unifies node-, link-, and graph-level tasks under a single graph–text alignment objective, and enables consistent optimization across domain/class/task incremental modes. To reduce task interference while encouraging positive transfer, \model performs category-aware gradient projection in structured subspaces, resolving conflicting updates and enabling conditional backward transfer to balance forward and backward knowledge flow. To further prevent cross-modal drift, \model introduces gradient magnitude modulation to coordinate update rates between graph and text encoders.
Extensive experiments on benchmark datasets demonstrate that \model consistently outperforms strong baselines across different backbone architectures, achieving superior continual performance and transferability. The code is available at \url{https://github.com/yuhanwang315/G2LoRA}.

\end{abstract}

%% file: text/1_intro.tex
\section{Introduction}\label{sec:intro}
%\ruijie{Let us unify model name, there is still several places using CBGP as model name (module name is okay, can change them into \model.}
Text-Attributed Graphs (TAGs) are prevalent in real-world applications such as academic citation networks, social platforms, and e-commerce systems~\cite{chen2020multi,chen2022brainnet}. In these graphs, nodes (e.g., papers, users, and products) are associated with rich textual descriptions, while edges encode their relational structure. To jointly leverage structural and semantic information, LLM-as-Aligner has emerged as a mainstream pre-training paradigm for TAGs~\cite{zhu2025graphclip,G2P2}. These methods typically follow a pre-training–fine-tuning framework, aligning graph and text representations into a shared latent semantic space via contrastive learning, which substantially improves transferability and generalization across downstream tasks.

% In practical deployments, TAG data often arrives in a streaming and incremental manner, making it infeasible to access historical data or perform joint training in a single stage~\cite{zhang2024continual}. 

In practical deployments, graph data often evolves over time, motivating studies on dynamic and continual graph learning~\cite{wang-metatkgr,wang-mpkd,wang-metahkg,zhang2024continual}. For TAGs, such evolution is coupled with textual attributes, making it infeasible to access historical data or perform joint training in a single stage. Although LLM-as-Aligner models achieve strong performance on individual downstream tasks, we observe severe catastrophic forgetting when they are sequentially fine-tuned under continual learning settings. To examine this issue, we adopt a representative LLM-as-Aligner framework, GraphCLIP~\cite{zhu2025graphclip}, as a testbed. As illustrated in Figure~\ref{fig:motivation_a}, we compare full-parameter fine-tuning, LoRA-based fine-tuning, multi-task joint training (Joint), and task-isolated training (Isolate). Full-parameter fine-tuning leads to drastic performance degradation on previous tasks, while LoRA alleviates forgetting. Nevertheless, even with parameter-efficient fine-tuning, substantial task interference and ineffective knowledge transfer persist.
% challenges and limitations of current approaches
\begin{figure}[t]
    \centering
    \captionsetup{skip=1pt}
    % --- Subfigure (a) ---
    \begin{subfigure}[t]{0.23\textwidth}
        \centering
        \includegraphics[width=0.9\linewidth]{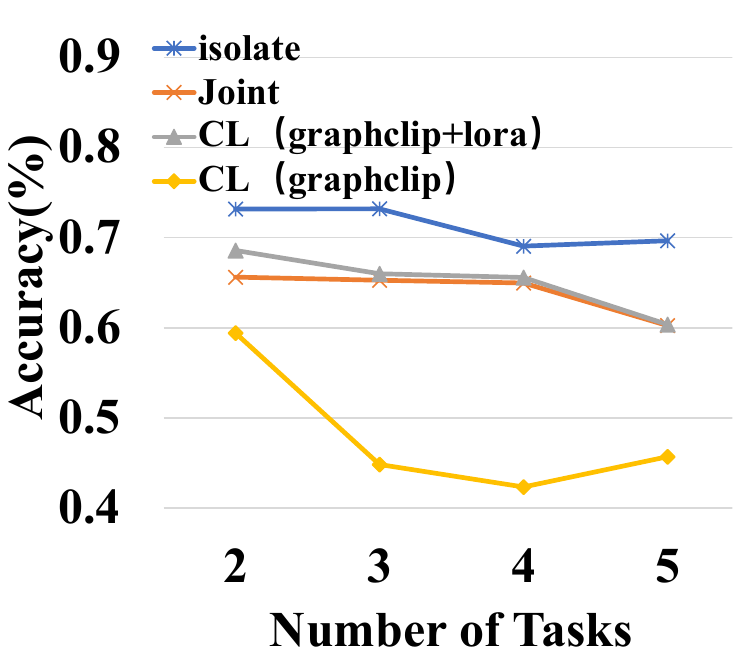}
        \subcaption{Graphclip Performance in CL}
        \label{fig:motivation_a}
    \end{subfigure}
    \hfill
    % --- Subfigure (b) ---
    \begin{subfigure}[t]{0.23\textwidth}
        \centering
        \includegraphics[width=0.9\linewidth]{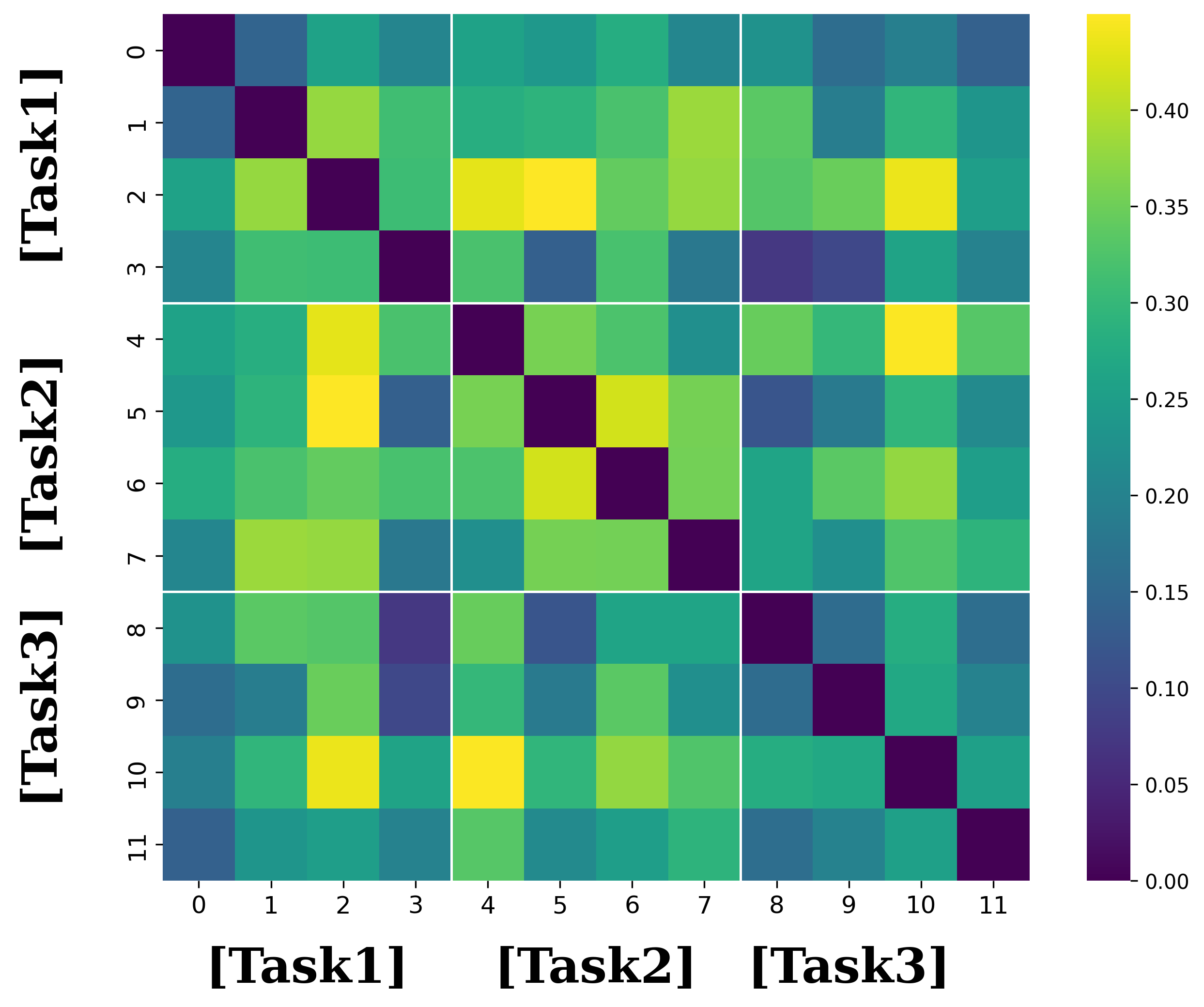}
        \subcaption{Class-Space Similarity in CIL}
        \label{fig:motivation_b}
    \end{subfigure}
    \\
    \hfill
    % --- Subfigure (c) ---
    \begin{subfigure}[t]{0.23\textwidth}
        \centering
        \includegraphics[width=0.9\linewidth]{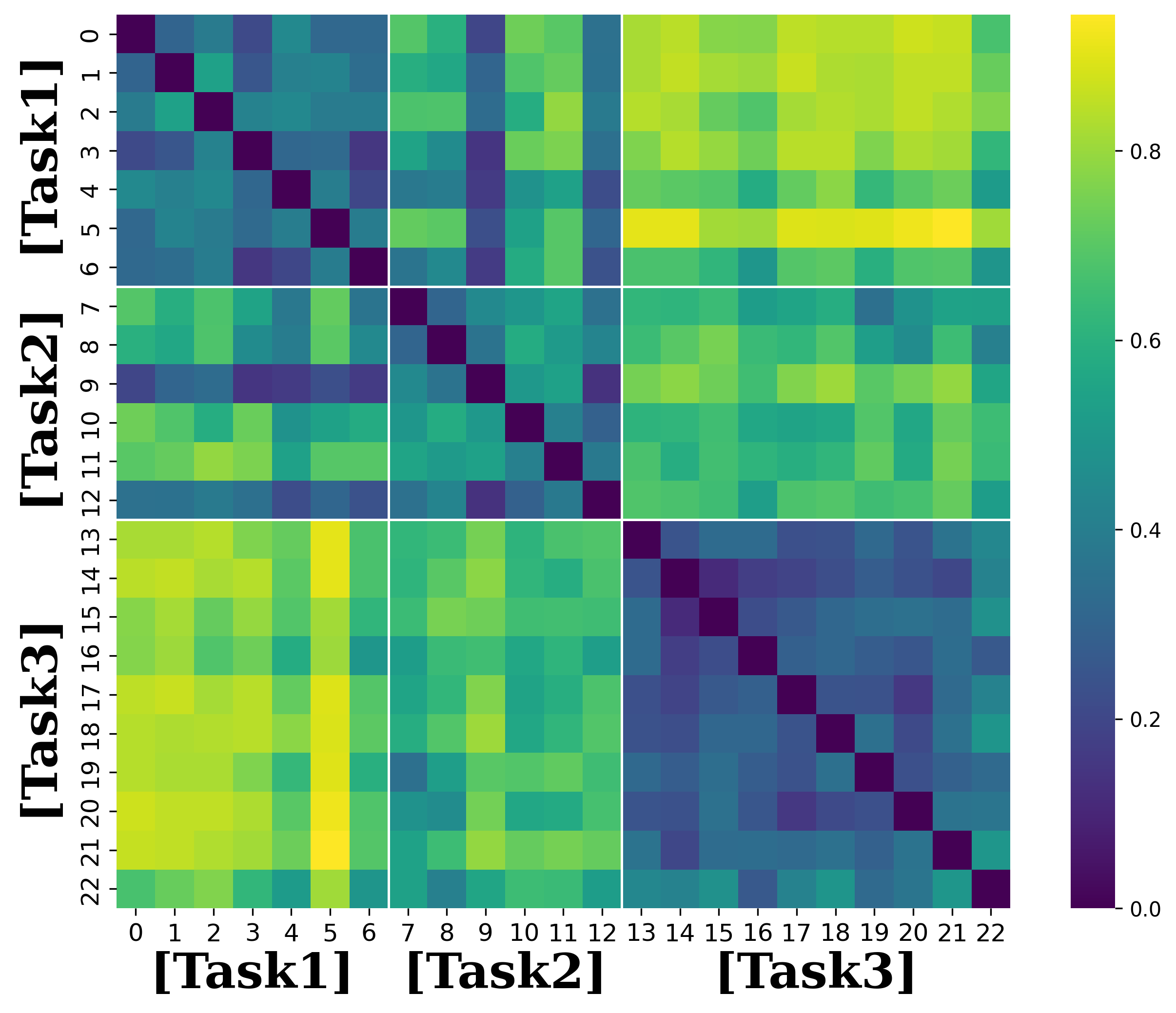}
        \subcaption{Class-Space Similarity in DIL}
        \label{fig:motivation_c}
    \end{subfigure} 
        \begin{subfigure}[t]{0.23\textwidth}
        \centering
        \includegraphics[width=0.9\linewidth]{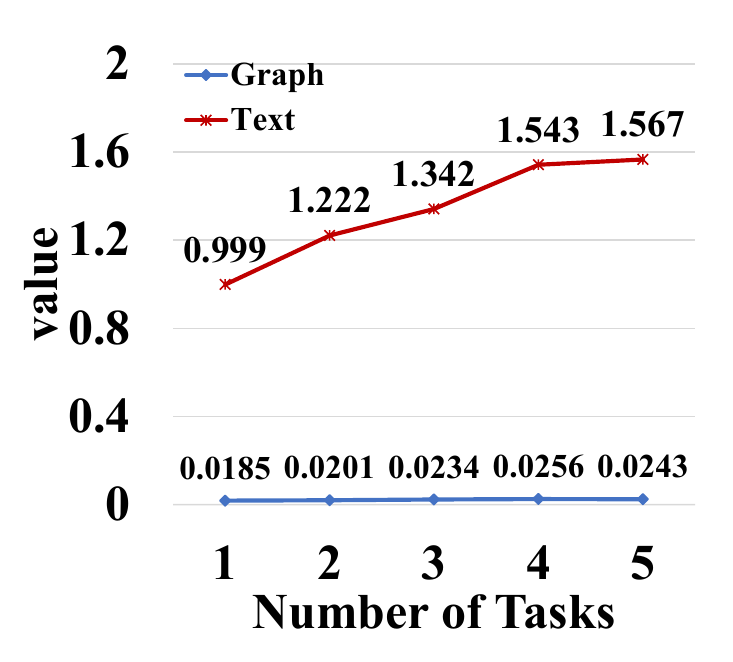}
        \subcaption{Variance of parameters}
        \label{fig:motivation_d}
    \end{subfigure}
    \caption{(a) LLM-as-Aligner Performance in CL: Joint vs. Isolate vs. CL (LoRA) vs. LoRA (b) Inter-class similarity matrix under the Class-Incremental Learning (CIL) setting, (c) Inter-class similarity matrix under the Domain-Incremental Learning (DIL) setting. (d) Parameter Changes in Graph and Text Models as Tasks Increase. 
    %\ruijie{need to add isolate mode in first figure.} 
    %\ruijie{1. figure 1(a) is a bit hard to understand and explain, as the trend differs from impression. How about we simplify it and just present catastrophic forgetting issue of LLM-as-Aligner methods, such as GraphClip and G2P2?} \ruijie{2. Figure 1d, enlarge font size, same size as subcaption.} \ruijie{3. Let us use Figure 1 to organize the introduction logic flow: 1(a) we observe forgetting issue in LLM-as-Aligner methods (GraphClip and G2P2), just use the old figure joint v.s. sequential v.s. isolate (if not too high); 1(b-c) challenge 1: diverse incremental mode and diverse task, inteference with each others; 1(d) challenge 2: cross-modal alignment.}
    }
    \label{fig:motivation}
\end{figure}

\nop{
Therefore, we investigate the continual learning problem of LLM-as-Aligner under a PEFT framework, approaching it from gradient-space relationship modeling,  with the goal of reducing task interference while promoting positive knowledge transfer between new and old tasks. However, this goal faces two key challenges that limit the applicability of existing graph continual learning methods: 
}

A key reason behind this phenomenon lies in conflicts within the gradient space, where task-specific updates overlap and interfere with previously acquired knowledge. Gradient orthogonal projection methods \cite{GPM,Inflora} have been shown to be effective in mitigating such conflicts by explicitly modeling gradient relationships. Motivated by this observation, we investigate continual learning for LLM-as-Aligner models under a PEFT framework, with the goal of reducing interference while promoting positive knowledge transfer between new and historical tasks. However, this setting introduces two fundamental challenges that limit the effectiveness of existing graph continual learning approaches.

\textbf{Challenge 1: Heterogeneous downstream tasks induce shifting optimization objectives.}
TAG downstream tasks exhibit substantial structural diversity, spanning node-level, edge-level, and graph-level predictions. This heterogeneity leads to continuously evolving learning objectives, which complicates the design of a single, parameter-efficient fine-tuning (PEFT) strategy. The challenge is greater than in traditional graph continual learning, as it requires the simultaneous optimization of both graph and text encoders. Without a shared and stable representation space, downstream tasks cannot be consistently aligned with the pre-trained graph–text embedding space, which limits the performance of pre-training models and might even cause negative transfer~\cite{sun2023all}. Moreover, task relationships in continual learning are highly non-uniform across different incremental scenarios. As shown in Figures~\ref{fig:motivation_b} and~\ref{fig:motivation_c}, task interactions may exhibit global similarity, local similarity, or block-wise similarity structures, leading to complex patterns of conflict and transfer. These observations suggest that the true units of knowledge conflict and transfer are not entire tasks, but rather the geometric relationships among category-specific discriminative subspaces. How to characterize and exploit such fine-grained subspace interactions within a unified framework, while accommodating diverse incremental settings, mitigating interference, and enabling knowledge transfer, remains an open challenge.

\nop{
\textbf{Challenge 1: }The diversity of downstream graph tasks leads to continuously evolving learning objectives, which makes it challenging to design a unified parameter-efficient fine-tuning strategy. A primary challenge stems from the structural heterogeneity across tasks at different granularities, including node-level, edge-level, and graph-level tasks. Due to the lack of a clearly shared representation space, downstream tasks cannot be stably aligned with the pre-trained task space, thereby limiting the performance of pre-training models and might even cause negative transfer~\cite{sun2023all}. Further complicating this issue are the complex and diverse relationships between tasks, which exacerbate the challenges of continual learning. Tasks in different incremental scenarios exhibit varying patterns of conflict and transfer. As illustrated in Figures~\ref{fig:motivation_b} and~\ref{fig:motivation_c}, task spaces may present global similarity, local similarity, or block-wise similarity structures, indicating highly non-uniform interactions among tasks. These observations suggest that the true units of knowledge conflict and transfer are not entire tasks themselves, but rather the geometric relationships among category-specific discriminative subspaces. Therefore, how to characterize such fine-grained subspace interactions within a unified framework, while simultaneously accommodating diverse incremental settings, mitigating task conflicts, and promoting knowledge transfer, remains a critical challenge in graph continual learning. \textbf{Challenge 2:} The graph and text modalities exhibit inherently different sensitivities to fine-tuning~\cite{peng2022balanced}, leading them to respond unevenly to the same optimization signals. During continual adaptation across multiple tasks, this discrepancy manifests as imbalanced update magnitudes between the two modalities, where the text encoder tends to undergo increasingly larger parameter changes (as illustrated in Figure~\ref{fig:motivation_d}). 
Such asynchronous updates break the synchronized evolution of the dual encoders and gradually distort the shared embedding space, resulting in cross-modal alignment drift over time. Therefore, a coordinated and modality-aware update mechanism is required to balance optimization dynamics and maintain stable cross-modal alignment.
}

\textbf{Challenge 2: Imbalanced cross-modal sensitivities cause alignment drift.}
Graph and text encoders respond differently to fine-tuning signals due to their inherent architectural and representational differences~\cite{peng2022balanced}. During continual adaptation, this discrepancy leads to imbalanced update magnitudes across modalities, with the text encoder often undergoing larger parameter changes, as illustrated in Figure~\ref{fig:motivation_d}. Such asynchronous updates disrupt the synchronized evolution of the dual encoders and gradually distort the shared embedding space, resulting in alignment drift. Therefore, maintaining stable graph–text alignment requires a coordinated, modality-aware optimization mechanism.

\nop{
To address the above challenges, we propose a graph continual learning \model framework for multiple incremental settings, combining Category-Aware Bidirectional Gradient Projection (CBGP) with methods for preserving cross-modal alignment. To tackle the first challenge, we redefine the downstream task fine-tuning approach by using a unified graph-text alignment loss, which is consistently applied to node-level, link-level, and graph-level tasks, achieving a single fine-tuning objective across all incremental modes. Furthermore, we decouple task relationships across different incremental scenarios from the category space. Specifically, we first decouple the gradient subspaces of historical tasks at the category level, allowing for a detailed representation of the discriminative structural differences between categories. Guided by class prototypes, we adaptively scale the projection strength for each subspace and impose differentiated constraints on the gradients of new tasks. This enables the unified modeling of heterogeneous task conflicts in class-incremental, domain-incremental, and task-incremental continual learning scenarios. Additionally, based on the knowledge transfer potential between task gradients, \model conditionally activates backward transfer to achieve an adaptive balance between forward and backward knowledge flow, thereby suppressing harmful interference from historical tasks while effectively leveraging transferable information between semantically related tasks.

To address the second challenge, we introduce a gradient modulation mechanism that dynamically adjusts the update rates between different modalities based on their sensitivities in continual learning. This prevents inconsistent learning dynamics between the graph and text modalities from causing representation drift, effectively alleviating cross-modal alignment shift. %\ruijie{In the end, highlight experimental results: Through extensive experiments on three incremental settings, \model~ achieves state-of-the-art performance and demonstrates robust transferability, outperforming xx baselines across xx datasets, achieving $xx\%$ relative improvements on average over the best baseline. At the same time, it is compatible with three representative LLM-as-Aligner models on TAGs.}
}
To address these challenges, we propose \model, a graph continual learning framework for LLM-as-Aligner models for three incremental settings (domain/class/task). To tackle Challenge~1, \model reformulates node-level, link-level, and graph-level tasks into a unified graph–text alignment objective, enabling a consistent fine-tuning strategy across task granularities. Building upon this formulation, \model introduces Category-Aware Bidirectional Gradient Projection (CBGP), which decomposes historical knowledge into category-level gradient subspaces. Guided by class prototypes, CBGP adaptively scales projection strengths to resolve gradient conflicts while selectively enabling conditional backward transfer, thereby balancing forward adaptation and backward knowledge consolidation across class-, domain-, and task-incremental scenarios. To address Challenge~2, \model further incorporates a gradient modulation mechanism that dynamically coordinates update rates between graph and text encoders based on their relative sensitivities during continual learning. This mechanism prevents asynchronous optimization dynamics from inducing representation drift, effectively preserving long-term cross-modal alignment.

In summary, our contributions are as follows:
\begin{itemize}[nosep, leftmargin=*]
    % \item  We propose \model, a unified graph continual learning framework that optimizes diverse downstream tasks under a shared graph-text alignment objective and characterizes geometric relationships among tasks in category-level discriminative subspaces, combined with conditional backward transfer to enable unified handling of task conflicts and knowledge transfer across different incremental settings.

    \item We propose \model, a unified graph continual learning framework for LLM-as-Aligner models that reformulates node-, link-, and graph-level tasks under a shared graph–text alignment objective, enabling consistent parameter-efficient fine-tuning across heterogeneous downstream tasks and incremental settings.
    
    \item We introduce category-aware bidirectional gradient projection, which models task conflicts and transfer through category-level discriminative subspaces and conditionally activates backward transfer, achieving unified fine-grained interference mitigation and effective knowledge transfer in class-, domain-, and task-incremental learning.

    \item We further develop a gradient magnitude modulation mechanism to coordinate update rates between graph and text encoders, stabilizing cross-modal representations and preventing alignment drift during continual adaptation.
    
    % \item we introduce a gradient magnitude modulation mechanism in the gradient orthogonal fine-tuning strategy, dynamically coordinating the update rates between graph and text encoders, thereby stabilizing cross-modal representations and preventing alignment drift.

    \item Experimental results show that \model achieves state-of-the-art performance and demonstrates strong transferability. It outperforms 18 baselines across multiple continual learning datasets for all three incremental learning settings, achieving an average relative improvement of 7.59\% over the best baseline. Additionally, it is compatible with three representative LLM-as-Aligner models on TAGs.
\end{itemize}

% % evaluation 
% We extensively evaluate the performance of the proposed framework on 
% % schedule of paper
% The rest of this paper is organized as follows:  We review the related work in Section~\ref{sec:6_related}. In Section~\ref{sec:2_background}, we describe our observations and motivations. We introduce the AutoSwitch framework in Section~\ref{sec:3_framework}, followed by the evaluation results in Section~\ref{sec:5_experiment}. 
% After discussing existing limitations in Section~\ref{sec:7_discussion}, we conclude in Section~\ref{sec:8_conclusion}.
% % Finally, we conclude the paper in Section~\ref{sec:conclusion}.

%% file: text/2_background.tex
\section{Preliminaries}\label{sec:2_background}
%\ruijie{we need to rewrite Preliminaries section, details is listed as follows. In the GCL, we need some equations and symbols, instead of pure description. I feel like the Continual Learning with LoRA can be moved to Section 3.2}

\noindent\textbf{Text-attributed Graphs (TAGs).} 
A text-attributed graph (TAG) is denoted as 
\( \mathcal{G} = (\mathcal{V}, \mathcal{E}, \mathcal{R}) \),
where \( \mathcal{V} \) is the set of \( N \) nodes, \( \mathcal{E} \) the edges, and \( \mathcal{R} \) the textual descriptions. 
Each node \( v_i \in \mathcal{V} \) is associated with a text attribute \( r_i \in \mathcal{R} \) providing semantic information.

Traditional graph learning encodes text into fixed-dimensional features, forming a node feature matrix
\( \mathbf{X} = [x_1, \ldots, x_N] \in \mathbb{R}^{N \times d} \),
where \( d \) is the embedding dimension. 
In contrast, recent LLM-as-Aligner approaches~\cite{zhu2025graphclip,G2P2} directly model text with large language models and align graph and text representations in a shared semantic space, enabling richer cross-modal interactions.

\nop{
A text-attributed graph (TAG) is defined as \( \mathcal{G} = (\mathcal{V}, \mathcal{E}, \mathcal{R}) \), where \( \mathcal{V} \) denotes the set of \( N \) nodes, \( \mathcal{E} \) represents the set of edges, and \( \mathcal{R} \) denotes the collection of textual descriptions associated with nodes. Each node \( v_i \in \mathcal{V} \) is paired with a textual attribute \( r_i \in \mathcal{R} \), which provides semantic information complementary to the graph structure.
In traditional graph learning settings, textual attributes are typically encoded into fixed-dimensional feature vectors, yielding a node feature matrix \( \mathbf{X} = [x_1, x_2, \ldots, x_N] \in \mathbb{R}^{N \times d} \), where \( d \) denotes the embedding dimension. This embedding paradigm has been widely adopted in graph neural network models. In contrast, recent LLM-as-Aligner approaches~\cite{zhu2025graphclip,G2P2} directly model textual attributes using large language models and align graph and text representations in a shared semantic space, enabling richer cross-modal interactions.
}
\begin{figure*}[t]
    \centering
    \captionsetup{skip=3pt}
    \includegraphics[width=0.95\textwidth]{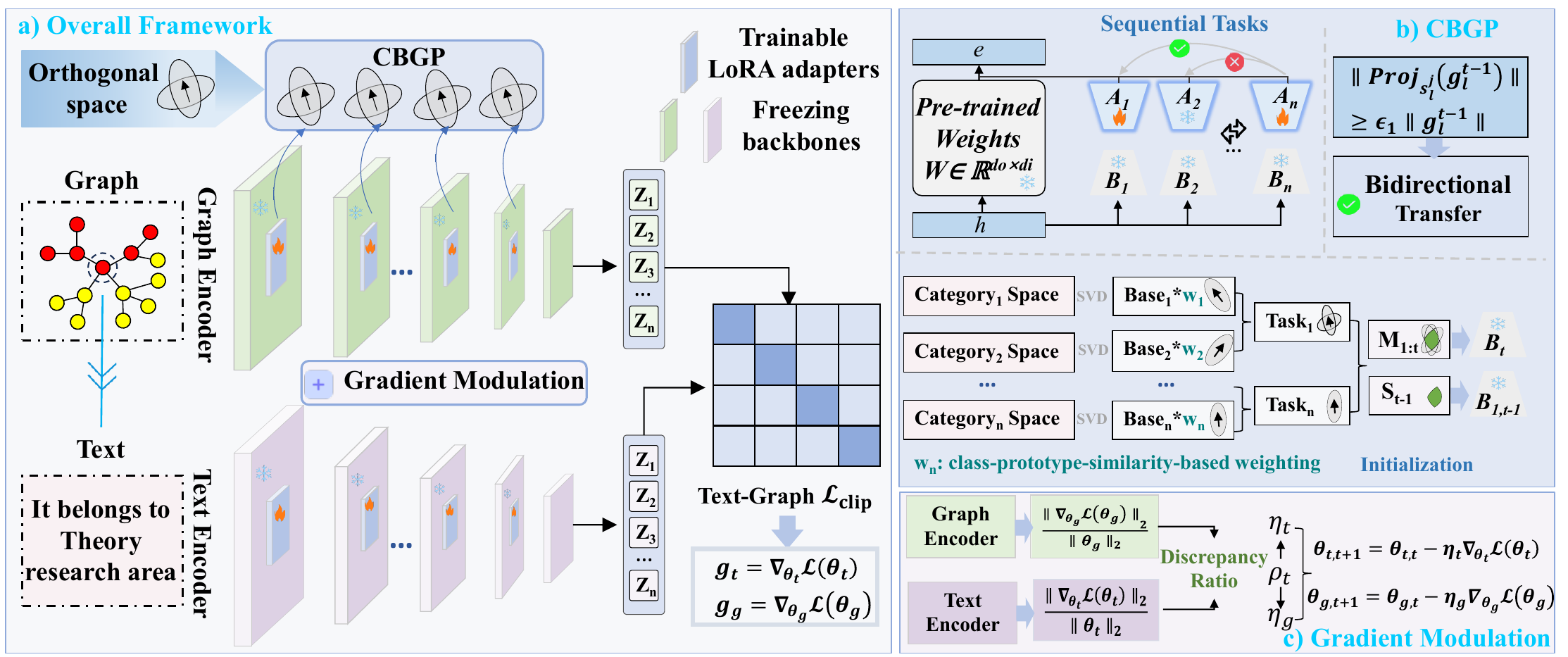}
    \caption{(a) Represents the overall framework. Graph and text encoders are frozen, and only LoRA adapters are tuned. CBGP is applied to the graph modality to constrain gradient updates and alleviate forgetting, while gradient modulation balances updates across modalities. (b) CBGP: Class-wise gradients are decomposed via SVD and weighted by class-prototype similarity to form forward/backward transfer subspaces, modeling conflicts and transfer. (c) Gradient modulation: Adjusts modality learning speeds based on gradient discrepancy for balanced updates.}
    \label{fig:framework_overview}
    \Description{1.}
\end{figure*}
% \ruijie{please refer https://arxiv.org/pdf/2505.18697 section 2.}
\noindent\textbf{Graph Continual Learning (GCL).} 
GCL addresses incrementally learning a sequence of graph-related tasks, where new tasks arrive over time, and historical data is unavailable. Newly introduced tasks often have limited labeled samples, making it challenging to match the data scale of earlier tasks. Thus, we adopt a strict $k$-shot learning paradigm for incremental tasks.
Formally, we consider an ordered sequence of training tasks
\( \{\mathcal{T}_0, \mathcal{T}_1, \dots, \mathcal{T}_{n-1}\} \),
where \( \mathcal{T}_0 \) denotes the base task and each subsequent task
\( \mathcal{T}_i \) (\( i \in \{1, 2, \dots, n-1\} \)) corresponds to an incremental task. Each task is defined as
\( \mathcal{T}_i = (\mathcal{G}_i, \mathcal{C}_i, \mathcal{V}_{l_i}) \),
where \( \mathcal{G}_i \) represents the graph (or subgraph), \( \mathcal{C}_i \) denotes the set of categories involved in the task, and \( \mathcal{V}_{l_i} \) is the set of labeled instances. Unless otherwise specified, tasks are assumed to be disjoint, i.e.,
\( \mathcal{G}_i \cap \mathcal{G}_j = \emptyset \) and
\( \mathcal{C}_i \cap \mathcal{C}_j = \emptyset \) for all \( i \neq j \).
Under this formulation, we consider three representative incremental learning settings in GCL:

\textit{(i) Class-Incremental Learning (CIL).} 
New node categories are introduced sequentially within the same graph domain. The model is required to learn representations for newly added classes while preserving performance on previously learned ones. The category set expands over time, i.e.,
\( \mathcal{C}_0 \rightarrow \mathcal{C}_0 \cup \mathcal{C}_i \).

\textit{(ii) Domain-Incremental Learning (DIL).}
% In DIL, sequential tasks correspond to different graph domains, typically instantiated as different datasets in GCL. The prediction space is shared across domains, while graph structures, node attributes, or textual distributions may vary. The model must adapt to each new domain while retaining knowledge from previous ones. Here, \( \mathcal{G}_0 \) and \( \mathcal{G}_i \) denote graph data from different domains/datasets.
% \nop{New nodes or subgraphs from different domains arrive sequentially, 
% %while the category space remains unchanged. 
% the model must adapt to domain shifts across tasks while retaining knowledge from earlier domains. In this setting, \( \mathcal{G}_0 \) and \( \mathcal{G}_i \) correspond to different graph domains.}
New nodes or subgraphs from different graph domains arrive sequentially, the model must adapt to domain shifts while retaining prior knowledge. \( \mathcal{G}_0 \) and \( \mathcal{G}_i \) represent graph data from different domains.

\textit{(iii) Task-Incremental Learning (TIL).}  
Tasks with heterogeneous prediction granularities are presented sequentially. Each task is defined as
\( \mathcal{T}_i = (\mathcal{G}_i, \mathcal{C}_i, \mathcal{V}_i) \),
where \( \mathcal{V}_i \) denotes the prediction space. Specifically,
\( \mathcal{V}_i \in \{\mathcal{V}_N, \mathcal{V}_E, \mathcal{V}_G\} \)
corresponds to node-level classification over nodes, edge-level prediction over edges, and graph-level classification over entire graphs, respectively.

\nop{
In GCL, newly added tasks usually have only a few samples, making it difficult to collect enough samples to match the scale of the initial task's incremental tasks. Therefore, we strictly follow the \textbf{k-shot} paradigm. Given an ordered sequence of training tasks \( \{\mathcal{T}_0, \mathcal{T}_1, \dots, \mathcal{T}_{n-1}\} \), where \( \mathcal{T}_0 \) serves as the base task and subsequent \( \mathcal{T}_i \) (\( i \in \{1, 2, \dots, n-1\} \)) represent incremental tasks, each task \( \mathcal{T}_i = \{\mathcal{G}_i, \mathcal{C}_i, \mathcal{V}_{l_i}\} \), where \( \mathcal{G}_i \) is the graph structure, \( \mathcal{C}_i \) is the set of categories, and \( \mathcal{V}_{l_i} \) is the set of labeled nodes, satisfies \( \mathcal{G}_i \cap \mathcal{G}_j = \mathcal{C}_i \cap \mathcal{C}_j = \emptyset \) for all \( i \neq j \). In GCL, we handle three types of incremental learning: \textbf{Class Incremental}: New node classes are introduced in the same graph domain, and the model learns representations for these classes while maintaining performance on previous ones. The category set \( \mathcal{C}_0 \) expands to \( \mathcal{C}_0 \cup \mathcal{C}_i \) for incremental tasks. \textbf{Domain Incremental}: New nodes or subgraphs from different domains are added, and the model adapts to domain shifts while retaining knowledge from earlier domains. For each task \( \mathcal{T}_i = \{\mathcal{G}_i, \mathcal{C}_i, \mathcal{V}_{l_i}\} \), \( \mathcal{G}_0 \) and \( \mathcal{G}_i \) represent different domains. \textbf{Task-Incremental (TIL)}: Tasks are sequentially presented with heterogeneous prediction granularities, corresponding to node-, edge-, and graph-level objectives. Formally, each task is defined as $\mathcal{T}_i=(\mathcal{G}_i,\mathcal{C}_i,\mathcal{V}_i)$, where $\mathcal{V}_i$ denotes the prediction space. Specifically, $\mathcal{V}_i \in \{\mathcal{V}_N, \mathcal{V}_E, \mathcal{V}_G\}$ represents node-level classification over nodes, edge-level prediction over edges, and graph-level prediction over entire graphs, respectively.
}

% \ruijie{please refer https://arxiv.org/pdf/2505.18697 section 2. few-shot ones. Highlight three incremental modes co-exists.} 

\noindent\textbf{LLM-as-Aligner on TAGs.} 
LLM-as-Aligner methods for TAGs adopt a shared dual-encoder architecture that maps graph and text modalities into a shared embedding space. A Transformer-based text encoder and a graph neural network encoder (e.g., GCN~\cite{GCN}) are jointly trained~\cite{G2P2}. Given a node-associated document, the text encoder produces a representation
\( \mathbf{t}_i \in \mathbb{R}^d \),
while the graph encoder generates a node embedding
\( \mathbf{z}_i \in \mathbb{R}^d \).

Let
\( \mathbf{T} = [\mathbf{t}_1, \ldots, \mathbf{t}_n]^\top \)
and
\( \mathbf{Z} = [\mathbf{z}_1, \ldots, \mathbf{z}_n]^\top \)
denote the text and graph embedding matrices. To establish bidirectional alignment, embeddings are normalized row-wise to obtain
\( \tilde{\mathbf{T}} \) and \( \tilde{\mathbf{Z}} \).
The node--text similarity matrix
\( \boldsymbol{\Lambda} \in \mathbb{R}^{n \times n} \)
is computed using cosine similarity:
\begin{equation}
\boldsymbol{\Lambda}
=
\tilde{\mathbf{Z}} \, \tilde{\mathbf{T}}^{\top} \cdot \exp(\tau),
\end{equation}
where \( \tau \in \mathbb{R} \) is a learnable temperature controlling the sharpness of the similarity distribution~\cite{radford2021learning}. Following CLIP-style contrastive learning, the alignment objective applies cross-entropy loss in both row-wise and column-wise directions:
\begin{equation}
\mathcal{L}_{\text{align}}(\boldsymbol{\Lambda})
=
\frac{1}{2}
\left(
\mathrm{CE}(\boldsymbol{\Lambda}, \mathbf{y})
+
\mathrm{CE}(\boldsymbol{\Lambda}^{\top}, \mathbf{y})
\right),
\end{equation}
where
\( \mathbf{y} = (1, 2, \ldots, n)^{\top} \)
denotes the identity label vector and
\( \mathrm{CE}(\cdot, \cdot) \)
is the cross-entropy loss.

\nop{LLM-as-Aligner methods for text-attributed graphs typically adopt a dual-encoder architecture that maps graph and text modalities into a shared embedding space. Specifically, a text encoder based on Transformer architectures and a graph encoder based on graph neural networks (e.g., GCN \cite{GCN}) are jointly trained~\cite{G2P2}. Given a node-associated document, the text encoder produces a representation
\( \mathbf{t}_i \in \mathbb{R}^d \),
while the graph encoder generates a corresponding node embedding
\( \mathbf{z}_i \in \mathbb{R}^d \).

Let
\( \mathbf{T} = [\mathbf{t}_1, \ldots, \mathbf{t}_n]^\top \)
and
\( \mathbf{Z} = [\mathbf{z}_1, \ldots, \mathbf{z}_n]^\top \)
denote the text and graph embedding matrices, respectively. To establish bidirectional alignment between graph and text modalities, embeddings are first normalized row-wise to obtain
\( \tilde{\mathbf{T}} \) and \( \tilde{\mathbf{Z}} \).
The node–text similarity matrix
\( \boldsymbol{\Lambda} \in \mathbb{R}^{n \times n} \)
is then computed using cosine similarity:
\begin{equation}
\boldsymbol{\Lambda}
=
\tilde{\mathbf{Z}} \, \tilde{\mathbf{T}}^{\top} \cdot \exp(\tau),
\end{equation}
where \( \tau \in \mathbb{R} \) is a learnable temperature parameter that controls the sharpness of the similarity distribution~\cite{radford2021learning}.

Following CLIP-style contrastive learning, the alignment objective is optimized by applying cross-entropy loss in both row-wise and column-wise. The overall contrastive loss is defined as:
\begin{equation}
\mathcal{L}_{\text{align}}(\boldsymbol{\Lambda})
=
\frac{1}{2}
\left(
\mathrm{CE}(\boldsymbol{\Lambda}, \mathbf{y})
+
\mathrm{CE}(\boldsymbol{\Lambda}^{\top}, \mathbf{y})
\right),
\end{equation}
where
\( \mathbf{y} = (1, 2, \ldots, n)^{\top} \)
denotes the identity label vector for contrastive supervision, and
\( \mathrm{CE}(\cdot, \cdot) \) represents the cross-entropy loss.}

%LLM as an aligner on TAGs typically utilizes a dual-modal embedding space to process TAGs by training both a text encoder and a graph encoder \cite{G2P2}. The text encoder is based on the Transformer, generating an embedding vector for each document \( \mathbf{t}_i \in \mathbb{R}^d \), where \( \theta_T \) represents the parameters of the transformer. The graph encoder is based on Graph Convolutional Networks (GCN), generating an embedding vector for each node \( \mathbf{z}_i \in \mathbb{R}^d \), where \( \theta_G \) represents the parameters of the GCN. To establish a bidirectional mapping between the graph and text, a contrastive learning strategy is employed based on the interactions between text and nodes. First, the embedding matrices \( \mathbf{T} \) and \( \mathbf{Z} \) are row-wise \( \ell_2 \)-normalized to obtain \( \tilde{\mathbf{T}} \) and \( \tilde{\mathbf{Z}} \), respectively. Then, a node-text similarity matrix \( \Lambda_1 \in \mathbb{R}^{n \times n} \) is computed to capture pairwise cosine similarity, as follows,

\nop{
LLM-as-Aligner on TAGs typically employs a dual-modal embedding space, training both a text encoder based on the Transformer and a graph encoder based on GCN \cite{G2P2}. The text encoder generates an embedding vector \( \mathbf{t}_i \in \mathbb{R}^d \) for each document, while the graph encoder generates an embedding vector \( \mathbf{z}_i \in \mathbb{R}^d \) for each node. To establish bidirectional mapping between graph and text, a contrastive learning strategy is used, first normalizing the embedding matrices \( \mathbf{T} \) and \( \mathbf{Z} \) row-wise to obtain \( \tilde{\mathbf{T}} \) and \( \tilde{\mathbf{Z}} \), then computing the node-text similarity matrix \( \Lambda_1 \in \mathbb{R}^{n \times n} \) using cosine similarity:
\begin{equation}
\Lambda_1 = \mathbf{\tilde{Z}} \mathbf{\tilde{T}}^\top \cdot \exp(\tau),
\end{equation}
where \( \tau \in \mathbb{R} \) is a trainable temperature parameter used to scale the similarity values \cite{radford2021learning}.

For contrastive learning, row- and column-wise cross-entropy loss is applied to optimize the similarity between the text and node representations. The loss function \( \mathcal{L} \) is defined as:
\begin{equation}
\mathcal{L}(\Lambda) = \frac{1}{2} \text{CE}(\Lambda, y) + \text{CE}(\Lambda^{\top}, y),
\end{equation}
where \( y = (1, 2, \dots, n)^{\top} \) is the label vector for contrastive training, and CE denotes the cross-entropy loss.
}
%\ruijie{explain the clip loss equation. follow G2P2 equation (4).} 
%\ruijie{we also need to explain the CLIP-based LLM-as-Aligner models, as we directly introduce LoRA-based continual learning in the section 3.2.}

%% file: text/3_framework.tex
\section{Methodology} \label{sec:3_framework}
%\ruijie{the symbol format needs revision. Matrices, vectors use $\mathbf{h}$, $\mathbf{H}$; space, set, loss use $\mathcal{L}$, $\mathcal{R}$}
\nop{We now introduce \model, a parameter-efficient graph continual learning framework built upon the LLM-as-Aligner paradigm. As illustrated in Figure~\ref{fig:framework_overview}, \model operates under a frozen dual-encoder architecture, where only lightweight LoRA adapters are fine-tuned for efficient adaptation. The framework integrates two key components: category-aware bidirectional gradient projection, which explicitly models and resolves task interference in the gradient space, and a gradient modulation mechanism, which coordinates optimization dynamics between graph and text modalities to preserve cross-modal alignment. In the following subsections, we detail the unified task reformulation strategy, the \model module, and the gradient modulation mechanism, respectively.}
We introduce \model, a parameter-efficient graph continual learning framework based on the LLM-as-Aligner paradigm. As shown in Figure~\ref{fig:framework_overview}, \model uses a frozen dual-encoder architecture, fine-tuning only lightweight LoRA adapters for efficient adaptation. We first propose a unified task formulation paradigm, so as to establish a unified continual learning objective. Building upon it, \model includes two key components: CBGP, which resolves task interference, and a gradient modulation mechanism, which balances optimization dynamics across graph and text modalities. The following subsections detail the unified task reformulation, CBGP module, and gradient modulation mechanism.

\nop{
The \model, which combines \model with gradient modulation (as shown in Figure~\ref{fig:framework_overview}), is designed for few-shot graph continual learning within the LLM-as-Aligner architecture using LoRA. The details of each component are discussed in the following sections.
}
%\ruijie{for model name, avoid using G-Talign, the only model we proposed is \model. If we need to refer to the backbone alignment methods, just call it LLM-as-Aligner methods. In the main experiment part, we briefly introduce which specific model we use as a representative LLM-as-Aligner method, plus we also use GraphClip and G2P2.}
\begin{figure}[t]
    \centering
    \captionsetup{skip=1pt}
    \includegraphics[width=0.8\linewidth]{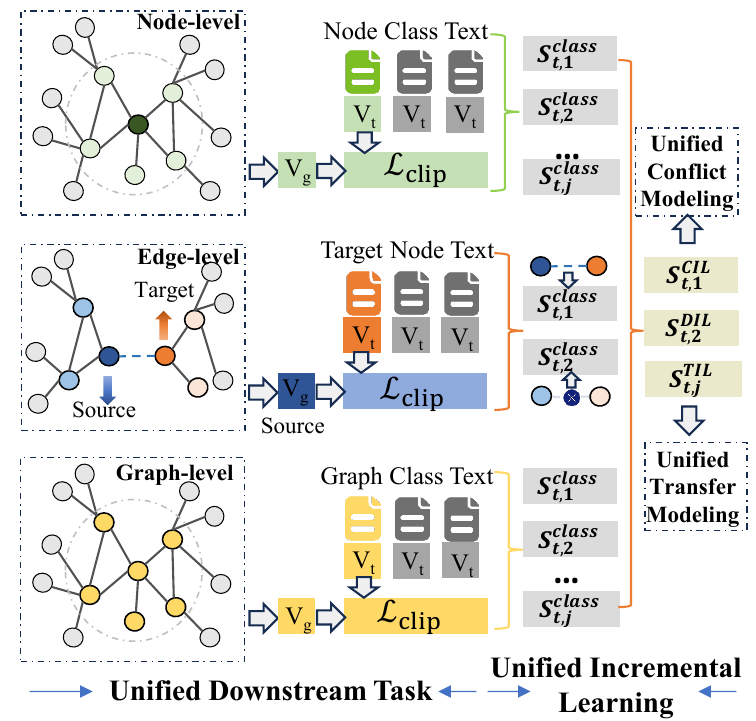}
    \caption{Left: unified downstream task. Right: unified gradient modeling for three incremental learning settings.
    }
    \label{fig:retask}
    \Description{1.}
\end{figure}

\subsection{Unified Task Formulation}
% \subsubsection{Why Reformulate Downstream Tasks.}  
% \noindent{\bf{Unified Downstream Task.}} 
Continual learning in the “pre-training and fine-tuning” framework involves various downstream tasks, such as node-, edge-, and graph-level tasks. Treating them as the same type is inappropriate due to differences in supervision signals and discriminative structures \cite{sun2023all}, which limits model performance and may cause negative knowledge transfer. Therefore, we need to reformulate these tasks into a more general form to bridge the gap.
% \subsubsection{How to Reformulate Downstream Tasks.}  
We unify task formulation by aligning graph structures with textual semantics. As shown in Figure~\ref{fig:retask}, for node-level tasks, we compute the similarity between node embeddings and corresponding textual class embeddings to obtain the similarity matrix \( \mathbf{\Lambda}_{\text{node}} \). For graph-level tasks, we compute the similarity between the entire graph embedding and its corresponding textual class embedding, yielding \( \mathbf{\Lambda}_{\text{graph}} \). For edge-level tasks, we compute the similarity between the source node's structural representation and the target node's textual embedding, resulting in \( \mathbf{\Lambda}_{\text{edge}} \). The loss functions for these tasks are unified as:
\begin{equation}
\mathcal{L}(\boldsymbol{\Lambda}) = \frac{1}{2} \left( \text{CE}(\boldsymbol{\Lambda}, \mathbf{y}) + \text{CE}(\boldsymbol{\Lambda}^\top, \mathbf{y}) \right),
\end{equation}
\noindent where $\mathbf{\Lambda}$ is the task-specific similarity matrix (e.g., $\mathbf{\Lambda}_{\text{node}}$, $\mathbf{\Lambda}_{\text{edge}}$, \( \mathbf{\Lambda}_{\text{graph}} \)), $\mathbf{y}\in (0,1)$ is the label vector for contrastive training and $\mathrm{CE}(\cdot,\cdot)$ is the cross-entropy loss function. %\ruijie{let us put all explaination right after equations in this form.}

%\ruijie{1. Let us use bold symbol for all vectors, matrices. such as $\Lambda$.}

%\ruijie{2. we explain how to unify three graph tasks, however, we have not explain why this task unification can further unify three incremental learning modes yet. }
\subsection{Category-aware Bidirectional Gradient Projection}
%\ruijie{This subsection is a bit messy, I revised a bit, still need to make the structure clear.}
In this part, we first introduce the gradient-based continual learning paradigm that forms the foundation of our design. Next, we discuss how to design a continual learning method unifying three incremental learning settings (domain/class/task incremental learning).

\noindent{\bf{Gradient Orthogonal Projection.}} GPM\cite{GPM} prevents interference between tasks by updating new tasks in the orthogonal space of the gradients of old tasks. \( \mathcal{M}_t \) represents the subspace of the model's gradients that contains the gradients of the first \( t-1 \) tasks when learning task \( t \) (where \( 1 \leq t \leq T \)). \( \mathbf{M}_t^\perp \) denotes the orthogonal complement of \( \mathbf{M}_t \). By maintaining an orthogonal basis for \( \mathbf{M}_t \) and projecting the new task gradient \( \mathbf{g}_t \) onto \( \mathbf{M}_t \), this is expressed as \( \mathbf{M}_t (\mathbf{M}_t^{\top} \mathbf{g}_t) \). Specifically, \( \mathbf{M}_t = [\mathbf{u}_1, \dots, \mathbf{u}_m] \) represents the orthogonal basis of \( \mathcal{M}_t \), where \( m = \text{dim}(\mathcal{M}_t) \). Then, GPM removes the projected gradient from \( \mathbf{g}_t \) by the following formula:
\begin{equation}
\hat{\mathbf{g}}_{t}=\mathbf{g}_{t}-\mathbf{M}_{t}(\mathbf{M}_{t})^{\top}\mathbf{g}_{t}.
\end{equation}
where \( \hat{\mathbf{g}}_t \) is the residual, lying in \( \mathcal{M}_t^\perp \). By modeling task conflicts, this method effectively prevents catastrophic forgetting. A representative work combining LoRA and gradient orthogonal strategy is InfLoRA \cite{Inflora}. In this method, before learning the \( t \)-th task, an expandable LoRA branch is introduced for each linear layer, consisting of a down-projection matrix \( \mathbf{B}_t \in \mathbb{R}^{r \times d_I} \) and an up-projection matrix \( \mathbf{A}_t \in \mathbb{R}^{d_O \times r} \). Therefore, the forward propagation of a linear layer can be expressed as:
\begin{equation}
e=\mathbf{W}\mathbf{h}+\sum_{j=1}^{\top}\mathbf{A}_j\mathbf{B}_j\mathbf{h}=\mathbf{W}_{t-1}\mathbf{h}+\mathbf{A}_t\mathbf{B}_t\mathbf{h}=\mathbf{W}_t\mathbf{h},
\end{equation}
where \( \mathbf{W}_{t}=\mathbf{W}_{t-1}+\mathbf{A}_{t}\mathbf{B}_{t}=\mathbf{W}+\sum_{i=1}^{t}\mathbf{A}_{i}\mathbf{B}_{i}.\) By pre-constructing the down-projection matrix \( \mathbf{B}_t \) in \( \mathcal{M}_t^\perp \), the update subspace of the new task is constrained, explicitly controlling the gradient interference between tasks in the parameter space.

\noindent{\bf{Category-Aware Gradient Subspace.}} Conflict and transfer differences across the three incremental settings in graph continual learning stem from structural variations in the category space. We therefore model gradient relationships uniformly in the category space. Specifically, for the \( l \)-th layer of the network, we construct a representation matrix from \( r \) samples' intermediate representations. Given the intermediate representation matrix \( \mathbf{H}_t = [x_{1,1}, x_{1,2}, \dots, x_{1,r}] \) for task \( t \), we partition it according to category labels as:
\begin{equation}
\mathbf{H}_t = [\mathbf{C}_{t,1}, \mathbf{C}_{t,2}, \dots, \mathbf{C}_{t,c_t}],
\end{equation}
where \( c_t \) denotes the number of categories in task \( t \). For each category-specific submatrix \( \mathbf{C}_{t,c} \), we process it using SVD to obtain the discriminative subspace of category \( c \), denoted as \( \mathbf{S}_{t,c} \). Repeating this for all categories in task \( t \), we obtain the task-level category-aware subspace:
\begin{equation}
\mathbf{M}_t = [\mathbf{S}_{t,1}, \mathbf{S}_{t,2}, \dots, \mathbf{S}_{t,C_t}],
\end{equation}
where \( C_t \) is the number of categories in task \( t \).

Additionally, along the temporal dimension of the continual learning process, we accumulate the task-level subspaces of all tasks to obtain the historical union subspace across tasks:
\begin{equation}
\mathbf{M}_{1:t} = [\mathbf{M}_1, \mathbf{M}_2, \dots, \mathbf{M}_t].
\end{equation}

For the three types of incremental tasks, we uniformly model the gradient space of the tasks as category space, thereby achieving unified modeling of the conflict and transfer relationships between tasks. In CIL, we construct \( \mathcal{S}_{t,c}^{\mathrm{CIL}} \) for each category of each task. In DIL, we use \( \mathcal{S}_{t,c}^{\mathrm{DIL}} \) for each category space of each domain task. In TIL, \( \mathcal{S}_{t,c}^{\mathrm{TIL}} \) corresponds to the category space of the node and the graph; for edge-level tasks, we construct a category space \( \mathcal{S}_{t,c,1}^{\mathrm{TIL}} \) for nodes with links, and a different category space \( \mathcal{S}_{t,c,2}^{\mathrm{TIL}} \) for nodes without links. 

As illustrated on the right side of Figure~\ref{fig:retask}, although the structural relationships of CIL, DIL, and TIL tasks in the gradient space differ, by using the unified modeling approach, we effectively integrate their gradient spaces into a unified category space, thus efficiently handling the conflicts and transfer relationships between the tasks.

\noindent{\bf{Dynamic Bidirectional Transfer Modeling.}}
After obtaining the class-aware historical gradient subspace 
$\mathcal{M}_{1:t-1}$, the orthonormal basis $\mathbf{H}_t$ of the current task subspace is constructed, based on which we perform bidirectional decoupling to derive the forward and backward transfer subspaces.

To adaptively control projection strength, we introduce a class-level 
modulation mechanism. Specifically, we compute the similarity $\mathrm{sim}_c$ between the 
current class subspace and previous class subspaces, and define:
\begin{equation}
\alpha_c = \zeta e^{-\mathrm{sim}_c}.
\end{equation}

According to the rank $r_j$ of each historical class-specific subspace, the scaling coefficients are expanded into a diagonal matrix
\begin{equation}
\mathbf{\Lambda}_{t-1}
=
\mathrm{diag}(
\alpha_1 \mathbf{1}_{r_1},
\dots,
\alpha_{C_{t-1}} \mathbf{1}_{r_{C_{t-1}}}
),
\end{equation}
which applies adaptive weights to the basis directions of historical gradients. With this projection, we orthogonally decouple the current gradient to obtain a non-interfering update component:
\begin{equation}
\mathbf{I}_t
=
\mathbf{H}_t
-
\mathbf{M}_{1:t-1}
\mathbf{\Lambda}_{t-1}
\mathbf{M}_{1:t-1}^{\top}
\mathbf{H}_t,
\end{equation}
whose column space lies in $\mathcal{H}_t \cap \mathcal{M}_{1:t-1}^{\perp}$. 
We use this component to initialize the LoRA down-projection matrix $\mathbf{B}_t$, thereby restricting parameter updates to a non-interfering subspace and mitigating catastrophic forgetting.

Meanwhile, we define a shared feature subspace to capture the common knowledge between the current and historical tasks, and strictly constrain the backward transfer within the common historical subspace:
\begin{equation}
\mathbf{N}_{t}
=
\sum_{i=2}^{t-1}
\left(
\mathbf{M}_{1:i-1} \cap \mathbf{M}_i
\right).
\end{equation}

Projecting the current task representation onto this shared basis yields the backward transfer component:
\begin{equation}
\mathbf{R}_t
=
\mathbf{N}_{t}
\mathbf{N}_{t}^{\top}
\mathbf{H}_t,
\end{equation}
which captures gradient directions consistent with previously learned knowledge. 
This component is used to initialize LoRA branches associated with historical tasks, thereby consolidating learned representations while preserving task-specific characteristics and enabling effective backward knowledge transfer.

To avoid negative transfer from bidirectional updates, we introduce a condition to dynamically activate backward transfer. When the gradient of task \( t \) aligns with the input subspace of historical task \( j \), backward transfer is enabled. This condition is defined as:
\begin{equation}
\| \text{Proj}_{S_j}(\nabla \mathcal{L}_t(W^{t-1})) \|_2 \geq \epsilon_1 \| \nabla \mathcal{L}_t(W^{t-1}) \|_2.
\end{equation}
When this condition is satisfied, bidirectional updates can effectively share knowledge from historical tasks,promoting their performance.

Therefore, we consider two alternative gradient update rules. For the \textbf{Rule 1}, we define the unidirectional update as:
\begin{equation}
\mathbf{W}^c = \mathbf{W} - \alpha \tilde{\mathbf{g}}_2(\mathbf{W}),
\end{equation}
and the bidirectional update \textbf{(Rule 2)} as:
\begin{equation}
\mathbf{W}^k = \mathbf{W} - \alpha \tilde{\mathbf{g}}_2(\mathbf{W}) - \alpha \tilde{\tilde{\mathbf{g}}}_2(\mathbf{W}).
\end{equation}

When the above condition holds, bidirectional updates are more effective than unidirectional updates at facilitating knowledge transfer. The following theorem provides theoretical guarantees.

%\ruijie{1. seems rule 1 and rule 2 are not specified. Need to associate with update rules above. 2. model parameter $w$ is a matrix, should use $\mathbf{W}$, please change in all places.}
% \textbf{Theorem 1.}
\begin{theorem}
\label{theo:rule}
Suppose the loss function $\mathcal{L}$ is $B$-Lipschitz and $\tfrac{H}{2}$-smooth.
Let $\mathbf{W}^c$ and $\mathbf{W}^k$ denote the model parameters obtained by applying
Rule~1 and Rule~2, respectively. If
\begin{equation}
\alpha < \frac{H}{4},
\quad
\epsilon_1 \ge \sqrt{\frac{2\alpha H}{4+\alpha H}},
\end{equation}
then
\begin{equation}
\mathcal{F}(\mathbf{W}^k) \le \mathcal{F}(\mathbf{W}^c),
\end{equation}
indicating that the bidirectional update (Rule~2) achieves a lower loss
than the unidirectional update (Rule~1). 
\end{theorem}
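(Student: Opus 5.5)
The plan follows the standard "bidirectional vs.\ unidirectional" comparison used in gradient-projection continual learning: a second-order Taylor bound on the loss under each rule, then a comparison of the two upper bounds. Throughout, $\mathcal{F}$ denotes the loss $\mathcal{L}$ evaluated on the relevant (historical plus current) objective. Write $\mathbf{g}=\nabla\mathcal{L}(\mathbf{W})$. Let $\tilde{\mathbf{g}}_2$ be the component of the current-task gradient projected onto the non-interfering subspace $\mathcal{H}_t\cap\mathcal{M}_{1:t-1}^{\perp}$ (the one realized through $\mathbf{B}_t$). Let $\tilde{\tilde{\mathbf{g}}}_2$ be its projection onto the shared subspace spanned by $\mathbf{N}_t$ (through $\mathbf{R}_t$). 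Since these two subspaces are orthogonal, $\langle\tilde{\mathbf{g}}_2,\tilde{\tilde{\mathbf{g}}}_2\rangle=0$.

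\textbf{Step 1: upper and lower bounds.} Because $\mathcal{L}$ is $\tfrac{H}{2}$-smooth, for any step $\mathbf{d}$ the descent lemma gives
\[
\mathcal{F}(\mathbf{W}-\alpha\mathbf{d}) \le \mathcal{F}(\mathbf{W})-\alpha\langle\mathbf{g},\mathbf{d}\rangle+\tfrac{H\alpha^2}{4}\|\mathbf{d}\|^2 .
\]
Applying this with $\mathbf{d}=\tilde{\mathbf{g}}_2+\tilde{\tilde{\mathbf{g}}}_2$ bounds $\mathcal{F}(\mathbf{W}^k)$ from above. To compare the two rules, I also need a lower bound on $\mathcal{F}(\mathbf{W}^c)$. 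The symmetric smoothness inequality gives
\[
\mathcal{F}(\mathbf{W}^c)\ge\mathcal{F}(\mathbf{W})-\alpha\langle\mathbf{g},\tilde{\mathbf{g}}_2\rangle-\tfrac{H\alpha^2}{4}\|\tilde{\mathbf{g}}_2\|^2 .
\]
Subtracting the two bounds and using orthogonality yields
\[
\mathcal{F}(\mathbf{W}^k)-\mathcal{F}(\mathbf{W}^c)\le -\alpha\langle\mathbf{g},\tilde{\tilde{\mathbf{g}}}_2\rangle+\tfrac{H\alpha^2}{4}\big(2\|\tilde{\mathbf{g}}_2\|^2+\|\tilde{\tilde{\mathbf{g}}}_2\|^2\big).
\]
Since $\tilde{\tilde{\mathbf{g}}}_2$ is an orthogonal projection of $\mathbf{g}$, we have $\langle\mathbf{g},\tilde{\tilde{\mathbf{g}}}_2\rangle=\|\tilde{\tilde{\mathbf{g}}}_2\|^2$.

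\textbf{Step 2: use the activation condition.} The backward-transfer condition gives $\|\tilde{\tilde{\mathbf{g}}}_2\|\ge\epsilon_1\|\mathbf{g}\|$. This holds because $\tilde{\tilde{\mathbf{g}}}_2$ is (a projection onto the span of) $\mathrm{Proj}_{S_j}(\nabla\mathcal{L}_t)$. Orthogonal decomposition also gives $\|\tilde{\mathbf{g}}_2\|^2\le\|\mathbf{g}\|^2-\|\tilde{\tilde{\mathbf{g}}}_2\|^2\le(1-\epsilon_1^2)\|\mathbf{g}\|^2$. Substituting both, the right-hand side is at most
\[
\alpha\|\mathbf{g}\|^2\Big[-\epsilon_1^2\big(1-\tfrac{H\alpha}{4}\big)+\tfrac{H\alpha}{2}\big(1-\epsilon_1^2\big)\Big]
\]
after bounding $\|\tilde{\tilde{\mathbf{g}}}_2\|^2$ by $\epsilon_1^2\|\mathbf{g}\|^2$. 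This lower substitution is valid only when its coefficient $1-\tfrac{H\alpha}{4}$ is nonnegative, which is where the step-size hypothesis enters. The bracket is nonpositive iff
\[
\epsilon_1^2\big(1+\tfrac{H\alpha}{4}\big)\ge\tfrac{H\alpha}{2},
\]
that is, iff $\epsilon_1^2\ge\frac{2\alpha H}{4+\alpha H}$. This is exactly the hypothesis on $\epsilon_1$, so $\mathcal{F}(\mathbf{W}^k)\le\mathcal{F}(\mathbf{W}^c)$. The $B$-Lipschitz assumption is used only to keep $\|\mathbf{g}\|\le B$ finite and the bounds meaningful.

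\textbf{Main obstacle.} The delicate point is the bookkeeping of the quadratic terms. Using an upper bound for $\mathbf{W}^k$ together with a lower bound for $\mathbf{W}^c$ produces the factor $2\|\tilde{\mathbf{g}}_2\|^2$. The exact constants $4+\alpha H$ and $2\alpha H$ only come out if the smoothness constant is taken as $\tfrac{H}{2}$ and that term is handled as above. I will first try the bracket as written. If the constants do not match, the fallback is to expand both losses around $\mathbf{W}-\alpha\tilde{\mathbf{g}}_2$, treating $\mathbf{W}^k$ as one extra step from $\mathbf{W}^c$, and to use the gradient at $\mathbf{W}^c$ in the activation condition. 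The condition $\alpha<\tfrac{H}{4}$ as stated is dimensionally odd; it presumably means $\alpha H<4$, i.e.\ $\alpha<4/H$, and I will use it in that form to guarantee $1-\tfrac{H\alpha}{4}>0$.
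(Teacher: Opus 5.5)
Your skeleton matches the paper's proof. It has a descent-lemma upper bound on $\mathcal{F}(\mathbf{W}^k)$, the reverse smoothness inequality as a lower bound on $\mathcal{F}(\mathbf{W}^c)$, subtraction, Pythagoras plus the activation condition, and a sign check that reduces to $\epsilon_1^2\ge\frac{2\alpha H}{4+\alpha H}$. Your reading $\alpha H<4$ is also what the paper's argument actually needs.

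The gap is the step $\langle\mathbf{g},\tilde{\tilde{\mathbf{g}}}_2\rangle=\|\tilde{\tilde{\mathbf{g}}}_2\|^2$. You take $\mathcal{F}$ to be the historical-plus-current objective, so $\mathbf{g}=\mathbf{g}_1+\mathbf{g}_2$. But $\tilde{\tilde{\mathbf{g}}}_2$ projects only the current-task gradient $\mathbf{g}_2$. The correct identity is $\langle\mathbf{g},\tilde{\tilde{\mathbf{g}}}_2\rangle=\|\tilde{\tilde{\mathbf{g}}}_2\|^2+\langle\mathbf{g}_1,\tilde{\tilde{\mathbf{g}}}_2\rangle$, and none of the hypotheses controls the last term. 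This is not cosmetic. Take linear losses on $\mathbb{R}^2$ with $S_1=S_2=\mathrm{span}(\mathbf{e}_1)$, $\mathbf{g}_2=(1,1)$, $\mathbf{g}_1=(-2,0)$, $\epsilon_1=1/\sqrt{2}$, and say $H=1$, $\alpha=0.1$. Every stated hypothesis holds, yet $\mathcal{F}(\mathbf{W}^k)-\mathcal{F}(\mathbf{W}^c)=\alpha>0$. Separately, each $\|\mathbf{g}\|$ in Step 2 should read $\|\mathbf{g}_2\|$, since the activation condition and Bessel's inequality concern $\mathbf{g}_2$; that fix is harmless.

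The paper covers this term with an unstated assumption. In its upper bound it replaces $-\alpha\langle\mathbf{g}_1,\tilde{\tilde{\mathbf{g}}}_2\rangle$ by $-\alpha\|\tilde{\tilde{\mathbf{g}}}_2\|^2$. It also drops $\langle\mathbf{g}_1,\tilde{\mathbf{g}}_2\rangle$ as zero, which your subtraction cancels anyway. That is where its coefficient $2\alpha-\frac{\alpha^2H}{2}$ comes from. It also uses the quadratic term $\frac{H}{2}\|\cdot\|^2$, consistent with $\mathcal{F}=\mathcal{L}_1+\mathcal{L}_2$ being $H$-smooth. Its final bound on $\mathcal{F}(\mathbf{W}^k)-\mathcal{F}(\mathbf{W}^c)$ is exactly twice yours, which is why your different accounting lands on the same threshold.

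To repair your proof, you have two options:
\begin{enumerate}
\item Let $\mathcal{F}$ be the current-task loss alone. Then $\mathbf{g}=\mathbf{g}_2$ and your argument is complete with no extra hypothesis, though the theorem then says nothing about historical tasks.
\item Keep the joint objective and add $\langle\mathbf{g}_1,\tilde{\tilde{\mathbf{g}}}_2\rangle\ge0$. This suffices only if the joint $\mathcal{F}$ is itself $\frac{H}{2}$-smooth. If instead each task loss is $\frac{H}{2}$-smooth, your quadratic terms double and the sign assumption gives only $\epsilon_1^2\ge\frac{2\alpha H}{2+\alpha H}$. Recovering the stated threshold then requires the paper's stronger alignment $\langle\mathbf{g}_1,\tilde{\tilde{\mathbf{g}}}_2\rangle\ge\|\tilde{\tilde{\mathbf{g}}}_2\|^2$.
\end{enumerate}
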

\begin{proof}
The proof of Theorem~\ref{theo:rule} can be found in Appendix \ref{pr_theo1}.
\end{proof}

The proposed Unified Gradient Modeling for Different Incremental Learning Scenarios, together with Category-aware Subspace Construction and Dynamic Bidirectional Transfer Modeling, jointly constitutes the core components of our proposed \model.

\subsection{Gradient Modulation}
In the dual-encoder architecture, the model must maintain alignment between the graph and text modalities during continual fine-tuning. However, as the model adapts to different downstream tasks, the two modalities often update at different rates, which can disrupt the alignment structure and lead to modality dominance or alignment drift. The work \cite{wu2022characterizing} introduces the concept of \emph{conditional learning speed}, which characterizes the relative rate at which different modalities absorb information from data by measuring the magnitude of their gradient updates, and suppresses the overly fast-learning modality via asymmetric feature scaling in the fusion layer. Inspired by this, we combine the gradient magnitude modulation mechanism with cross-modal continual learning, dynamically monitoring and balancing the update strengths of the two modalities to stabilize cross-modal alignment.

In the dual-encoder architecture, let the parameters of the graph encoder and the text encoder be denoted by \( \theta_g \) and \( \theta_t \), respectively. At the \( t \)-th iteration, given the current mini-batch loss \( \mathcal{L}(\theta_g) \) and \( \mathcal{L}(\theta_t) \), we compute the gradient norms of the two modalities as follows: 
%\ruijie{we may need to replace the gradient norms $G$, as it also represents the graph.}
\begin{equation}
\left\|g_g\right\|_2 = \left\| \nabla_{\theta_g} \mathcal{L}(\theta_g) \right\|_2, 
\qquad 
\left\|g_t\right\|_2= \left\| \nabla_{\theta_t} \mathcal{L}(\theta_t) \right\|_2.
\end{equation}

To eliminate parameter scale differences, we normalize the gradients of the graph and text encoders by dividing the gradient magnitudes by the corresponding parameter magnitudes, obtaining \( \|\widetilde{g}_g\|_2 \) and \( \|\widetilde{g}_t\|_2 \).

Based on these normalized gradients, we define the modality-wise learning speed ratio \( \rho_t \), which characterizes the relative update rate between the graph modality and the text modality at the current iteration:
\begin{equation}
\rho_t = \frac{\left\|\widetilde{g}_g\right\|_2}{\left\|\widetilde{g}_t\right\|_2 }.
\end{equation}

When \( \rho_t < 1 \), the text modality is updated faster; when \( \rho_t > 1 \), the graph modality is updated faster. Accordingly, we adaptively modulate the learning rates by applying a decay factor to the faster-learning modality. For example, when \( \rho_t < 1 \), the learning rate of the text encoder is adjusted as:
\begin{equation}
\eta_t = \eta_0 \cdot \lambda \cdot \rho_t,
\end{equation}
and the other modality is treated symmetrically when \( \rho_t > 1 \).

%Through this gradient-magnitude-based dynamic learning rate modulation, the learning speeds of the two modalities can be adaptively balanced during training, preventing any single modality from dominating the optimization process. This mechanism stabilizes cross-modal representations and effectively mitigates alignment drift in continual learning.

%
\input{table/main_result.tex}
\subsection{Category Filtering Module in CIL}
\vspace{-1mm}
In CIL, as the number of categories increases, model performance degrades. Graph data's domain knowledge and topology make task ID prediction easier. For example, TPP \cite{TPP} uses Laplacian smoothing for task ID prediction and trains independent parameters for each task. In our approach, we rely on category prototypes for classification and filter out less relevant prototypes to reduce interference from irrelevant categories. Task categories are predicted by calculating cosine similarity between prototypes. During inference, when tasks exhibit similar predictions, a top-k strategy is used to select the most relevant categories, reducing interference.

%% file: table/main_result.tex
\begin{table*}[t]
\centering
\caption{10-shot performance comparison across three incremental learning scenarios (AA/AF) (mean accuracy ± std. dev.).}
\label{tab:main_results}
\resizebox{\textwidth}{!}{
\begin{tabular}{c|cccccc|cc|cc}
\toprule
\multirow{2}{*}{Method} 
& \multicolumn{6}{c}{CIL} 
& \multicolumn{2}{|c}{DIL} 
& \multicolumn{2}{|c}{TIL} \\
\cmidrule(lr){2-7}\cmidrule(lr){8-9}\cmidrule(lr){10-11}
& \multicolumn{2}{c}{Photo} 
& \multicolumn{2}{c}{Computer} 
& \multicolumn{2}{c}{History} 
& \multicolumn{2}{|c}{Cora+Citeseer+WikiCS} 
& \multicolumn{2}{|c}{Cora+WikiCS+Photo} \\
\cmidrule(lr){2-3}\cmidrule(lr){4-5}\cmidrule(lr){6-7}\cmidrule(lr){8-9}\cmidrule(lr){10-11}
& AA$\uparrow$ & AF$\uparrow$ & AA$\uparrow$ & AF$\uparrow$  & AA$\uparrow$ & AF$\uparrow$ & AA$\uparrow$ & AF$\uparrow$  & AA$\uparrow$ & AF$\uparrow$  \\
\midrule
GCN & 43.92±1.84 & -26.44±6.27 & 43.43±0.86 & -48.52±2.94 & 46.04±1.06 & -35.23±1.33 & 43.45±2.01 & -31.45±3.87 & 59.55±0.27 & -15.92±0.47 \\
EWC    & 43.48±2.14 & -44.87±4.03 & 40.52±0.21 & -54.23±0.47 & 35.01±1.34 & -38.33±2.37 & 49.83±0.68 & -44.95±1.80 & 60.55±0.93 & -5.27±1.29 \\
LwF & 43.24±1.91 & -31.24±3.62 & 39.33±1.58 & -49.93±2.89 & 40.63±0.99 & -39.35±1.91 & 38.56±3.09 & -29.70±1.71 & 53.03±0.81 & -3.12±4.18 \\
Cosine & 52.56±1.07 & -34.12±0.73 & 45.78±0.52 & -22.01±0.91 & 51.61±0.71 & -18.36±1.14 & 54.33±0.89 & -20.31±1.63 & 62.22±1.06 & 0.00±0.00 \\
TEEN & 43.05±0.29 & -51.52±0.43 & 35.68±0.44 & -58.52±0.33 & 35.53±0.51 & -55.71±3.12 & 42.93±0.21 & -49.36±0.48 & 52.77±2.50 & -12.78±3.40 \\
TPP  & 59.78±1.87 & 0.00±0.00 & 45.68±1.63 & 0.00±0.00 & 47.95±1.18 & 0.00±0.00 & 63.75±7.90	&0.00±0.00 & 56.71±0.49 & 0.00±0.00 \\
\midrule
BERT & 38.03±0.85 & -17.42±1.31 & 35.43±0.36 & -26.51±1.11 & 30.23±0.94 & -18.53±1.53 & 29.23±1.43 & -9.38±2.73 & 35.75±3.84 & -0.22±0.66 \\
RoBERTa & 48.18±9.37 & -1.66±5.34 & 42.63±5.53 & -5.39±1.58 & 42.89±2.64 & -7.11±3.29 & 44.83±2.24 & -12.02±2.38 & 45.54±5.35 & -4.63±1.52 \\
LLaMA & 52.89±2.45 & -48.18±7.96 & 44.61±0.91 & -61.30±5.20 & 43.83±2.21 & -54.18±6.82 & 44.65±3.05 & -55.02±2.94 & 60.23±1.81 & -5.37±1.93 \\
SimpleCIL & 48.83±0.92 & -31.61±1.98 & 43.68±0.58 & -27.13±1.61 & 39.55±2.71 & -16.03±3.89 & 24.45±4.48 & -21.72±1.38 & 42.95±3.46 & -20.37±12.99 \\
\midrule
GCN$_{\text{LLMEmb}}$ & 41.03±0.53 & -53.52±0.96 & 32.26±0.37 & -62.54±0.34 & 29.14±0.75 & -50.91±2.56 & 36.15±1.85 & -58.53±23.63 & 42.83±2.53 & -8.02±5.15 \\
GraphPrompter & 55.54±5.95 & -39.23±9.82 & 40.48±1.28 & -54.03±1.43 & 43.51±2.03 & -51.22±6.13 & 48.24±1.67 & -46.52±2.47 & 51.44±2.65 & -2.66±1.83 \\
GraphGPT & 39.49±3.17 & -33.09±6.18 & 31.53±1.57 & -36.85±2.38 & 23.49±1.17 & -34.67±2.78 & 37.26±0.94 & -42.08±1.69 & 57.29±1.84 & -14.86±8.39\\
LLaGA & 39.52±4.12 & -55.03±5.83 & 35.35±1.97 & -59.43±3.48 & 30.31±1.67 & -57.61±3.12 & 38.52±1.34 & -56.18±3.12 & 57.23±0.54 & -0.07±1.31 \\
SimGCL & \underline{81.60±0.78} & -8.03±0.46 & \underline{78.64±1.06} & -6.64±0.25 & 44.55±1.05 & -7.99±0.38 & \underline{66.00±1.51} & -3.12±0.84 &60.90±1.91 & 0.09±0.30  \\
\midrule
LoRA& 50.64±3.11 & -49.79±4.76 & 57.01±0.58 & -23.67±1.49 & 44.76±1.44 & -24.11±2.11 & 65.99±0.43 & -5.98±0.56 & 73.05±1.98 &  -6.05±3.18\\
InfLoRA& 53.08±0.66 & -46.48±1.31 & 60.92±0.71 & -29.74±1.60 & \underline{53.73±0.60} & -26.80±1.08 & 65.20±0.11 & -5.88±0.63 &  \underline{74.11±4.62} &  -8.23±5.61\\
SD-LoRA& 50.42±0.67 & -39.72±2.45 & 56.59±0.28 & -25.58±0.39 & 45.08±0.79 & -25.47±2.33& 66.46±0.51 & -5.14±0.97 &  72.35±0.73 &  -9.57±1.22 \\

\midrule
\textbf{\model} 
& \textbf{84.68±0.81} 
& -2.03±1.23
& \textbf{85.65±0.11} & 0.27±0.4
& \textbf{78.81±0.85} & 4.19±0.12
& \textbf{67.74±0.07} & -3.94±0.20
& \textbf{76.18±0.33} & -2.09±0.62 \\
\midrule
%Gains(\%) & \textit{2.08} & \phantom{} & \textit{7.01} & \phantom{} & \textit{25.03} & \phantom{} & \textit{1.74} & \phantom{} & \textit{2.07} & \phantom{} \\
%\bottomrule
\end{tabular}}
\end{table*}

%% file: text/5_experiment.tex
\section{Experiments} \label{sec:5_experiment}
In this section, we empirically evaluate the proposed \model. We begin by assessing its overall performance under three representative incremental learning settings (Section~\ref{sec:4.2}). Next, we conduct ablation studies to examine the contribution of each component (Section~\ref{sec:4.3}), followed by an efficiency evaluation (Section~\ref{sec:4.5}) and an analysis of the effectiveness of the proposed gradient modulation mechanism (Section~\ref{sec:4.7}). We also analyze the model's generalization ability across different LLM-as-Aligner backbones (Section~\ref{sec:4.4}) and investigate its few-shot learning performance under varying $k$-shot regimes (Section~\ref{sec:4.6}). Finally, we provide sensitivity analyses of model hyperparameters (Section~\ref{Hyperparameter}).

\nop{
In this section, we empirically evaluate our proposed method. We first assess its overall performance in three incremental learning scenarios (Section \ref{sec:4.2}). Additionally, we perform ablation studies to validate the effectiveness of each component (Section \ref{sec:4.3}); analyze the generalization capability across different alignment frameworks (Section \ref{sec:4.4}) and conduct an efficiency analysis (Section \ref{sec:4.5}). We further explore the performance of few-shot learning under different k-shot settings (Section \ref{sec:4.6}); and verify the effectiveness of the GM mechanism (Section \ref{sec:4.7}). The sensitivity analysis of the model parameters is provided in Appendix \ref{Hyperparameter}.
}
\vspace{-2mm}
\subsection{Experimental Setups}
\label{sec:setup}
\vspace{-1mm}
\noindent{\bf Datasets.}
We evaluate \model on a total of 11 publicly available text-attributed graph (TAG) datasets spanning four diverse domains. Following the pre-training protocol of GraphCLIP~\cite{zhu2025graphclip}, five large-scale TAG datasets are used as source data for the pre-training stage. 
For downstream evaluation, we select six smaller-scale TAG datasets as target datasets and organize them into different incremental learning settings. These target datasets are used to assess the continual learning performance of LLM-as-Aligner models under few-shot and streaming scenarios. The detailed statistics of all datasets are provided in Appendix~\ref{DatasetDetails}.

\nop{
\subsubsection{Datasets.}
In this work, we utilize 11 open TAGs from four diverse domains. 
Following the pre-training protocol of GraphCLIP \cite{zhu2025graphclip}, we employ five large-scale TAGs as source datasets for the pre-training stage. 
In addition, we select six small-scale TAGs as target datasets and organize them into different incremental learning settings for downstream evaluation. 
The statistics of all datasets are reported in Table ~\ref{tab:datasets}. 
More detailed descriptions of these datasets can be found in Appendix~\ref{DatasetDetails}.
}

\noindent{\bf Baselines.}
We compare \model with a comprehensive set of baselines, which are grouped according to their backbone architectures and parameter adaptation strategies into three categories:
\textit{(1) GNN-based methods.}  
This group includes classical graph continual learning approaches built upon standard GNN backbones, such as Vanilla GCN~\cite{GCN}, EWC~\cite{EWC}, LwF~\cite{LWF}, Cosine, TPP~\cite{TPP}, and TEEN~\cite{TEEN}.
\textit{(2) LLM-based methods.}  
This category consists of pure language-model baselines without explicit graph modeling. We consider encoder-only models, including BERT~\cite{BERT} and RoBERTa~\cite{Roberta}, the decoder-only model LLaMA~\cite{llama}, as well as RoBERTa combined with SimpleCIL~\cite{simplegcl} for continual learning.
\textit{(3) GLM-based methods.}  
These methods integrate large language models with graph representations and can be further divided into three paradigms:
(i) \emph{LLM-as-Enhancer}, including GCNEmb~\cite{GCNemb} and ENGINE~\cite{ENG};
(ii) \emph{LLM-as-Predictor}, including GraphPrompter~\cite{graphptompter}, GraphGPT~\cite{Graphgpt}, LLaGA~\cite{llaga}, and SimGCL~\cite{simgcl};
and (iii) \emph{LLM-as-Aligner}, including GraphCLIP~\cite{zhu2025graphclip} and G2P2~\cite{G2P2}.
In addition, we include representative parameter-efficient fine-tuning (PEFT) baselines based on low-rank adaptation and its variants, including LoRA~\cite{hu2022lora}, InfLoRA~\cite{Inflora}, SD-LoRA~\cite{SD-LORA}, which are strong PEFT baselines to evaluate the effectiveness of \model under the same backbone architectures and continual learning protocols. %Details are provided in Appendix~\ref{baseline}.

\nop{
\subsubsection{Baselines.}
We categorize the compared methods according to their backbone integration and parameter adaptation strategies into three groups:
\textbf{(1) GNN-based methods.} 
This group includes classical continual learning approaches built upon standard GNN backbones, such as Vanilla GCN \cite{GCN}, EWC \cite{EWC}, LwF \cite{LWF}, Cosine, TPP \cite{TPP} and TEEN \cite{TEEN}. 
\textbf{(2) LLM-based methods.} 
This category consists of pure language model baselines, including encoder-only models such as BERT \cite{BERT} and RoBERTa \cite{Roberta}, the decoder-only model LLaMA \cite{llama}, as well as RoBERTa integrated with SimpleCIL \cite{simplegcl}. \textbf{(3) GLM-based methods.} 
These methods integrate large language models with graph models, which can be further divided into three paradigms: 
(i) \emph{LLM-as-Enhancer}, including GCNEmb \cite{GCNemb} and ENGINE \cite{ENG}; 
(ii) \emph{LLM-as-Predictor}, including GraphPrompter \cite{graphptompter}, Graphgpt \cite{Graphgpt} and LLaGA \cite{llaga};
(iii) \emph{LLM-as-Aligner}, including Graphclip \cite{zhu2025graphclip}, G2P2 \cite{G2P2}. In addition, we compare with representative parameter-efficient adaptation baselines based on low-rank updates and their variants, including LoRA \cite{hu2022lora}, InfLoRA \cite{Inflora}, SD-LoRA \cite{SD-LORA}, and BiLoRA \cite{zhu2025bilora}, which serve as strong PEFT baselines to evaluate the effectiveness of our proposed method under the same backbone and continual learning protocol. Detailed descriptions of these baselines are provided in the Appendix \ref{baseline}.
}

\noindent{\bf Evaluation Metrics.}
We thoroughly evaluate the continual learning performance using two widely adopted and commonly used metrics in graph continual learning: \emph{Average Accuracy} (AA) and \emph{Average Forgetting} (AF). AA measures the average predictive performance across all tasks observed so far, reflecting the overall learning effectiveness across all tasks. AF quantifies the degree of performance degradation on previously learned tasks, where higher values indicate better knowledge retention and less catastrophic forgetting over time.

\noindent{\bf Implementation Details.}
We follow the pre-training architecture of GraphCLIP\cite{zhu2025graphclip}, using a LLM as the aligner, with all data dimensions set to 384. The key difference is that we define the pretraining loss solely as a contrastive learning loss, resulting in a basic pretraining model. More implementation details can be found in the Appendix \ref{Setup}.

\nop{
\subsubsection{Evaluation metrics}
We adopt two commonly used evaluation metrics in continual graph learning: Average Accuracy (AA) and Average Forgetting (AF). 
A higher AA indicates better overall performance, while a higher AF implies less forgetting, i.e., stronger knowledge retention on previous tasks.

\subsubsection{Implementations.} We follow the pretraining model architecture of GraphCLIP\cite{zhu2025graphclip}, using a LLM as the aligner, with all data dimensions set to 384. The key difference is that we define the pretraining loss solely as a contrastive learning loss, resulting in a basic pretraining model. More implementation details can be found in the Appendix \ref{Setup}. 
}

\subsection{Overall performances}
\label{sec:4.2}
\vspace{-1mm}
Under the few-shot setting, we compare the proposed \model with representative training paradigms across three different incremental learning scenarios: class-incremental (CIL), domain-incremental (DIL), and task-incremental learning (TIL). All experiments are repeated five times, and the averaged results are reported in Table~\ref{tab:main_results}.
Overall, \model consistently outperforms existing approaches across different backbones and incremental settings. In particular, compared with the second-best baseline, our method achieves substantial improvements under the CIL scenario, with especially pronounced gains on the \textit{History} dataset, where the relative performance improvement ranges from 27.2\% to 55.32\%. Moreover, under the same backbone architecture, \model consistently surpasses GraphCLIP and various LoRA-based continual learning variants, demonstrating its effectiveness beyond just backbone or parameter-efficiency advantages.
Notably, on several datasets, \model achieves \emph{positive forgetting rates}, indicating not only effective retention of previously learned knowledge but also beneficial backward transfer. These results collectively demonstrate the effectiveness and robustness of the proposed method in challenging few-shot graph continual learning scenarios.

\nop{
Under the few-shot setting, we compare the proposed method with other mainstream training paradigms under three incremental scenarios, namely class-incremental, domain-incremental, and task-incremental learning. All experiments are repeated five times, and the average results are reported in Table~\ref{tab:main_results} Under the few-shot setting, we compare the proposed method with representative training paradigms under CIL, DIL, and TIL learning scenarios. All experiments are repeated five times, and the averaged results are reported in the table. From the results, our method consistently outperforms existing approaches across different backbones. In particular, compared with the second-best baseline SimGCL, our model achieves substantial improvements under the CIL setting, with especially remarkable gains on the History dataset, where the performance improvement ranges from 27.2\% to 55.32\%. Moreover, under the same backbone, our method consistently surpasses GraphCLIP and various LoRA-based continual learning variants. Notably, in several datasets, our approach even achieves positive forgetting rates, indicating not only effective knowledge retention but also beneficial backward transfer. These results collectively demonstrate the effectiveness and robustness of the proposed method in few-shot continual learning scenarios.
}
\input{table/table_gaussian_har.tex}

\subsection{Ablation Studies and Analysis}
\label{sec:4.3}
\noindent{\bf Ablation on Model Components.}
Table~\ref{tab:ablation} presents the ablation results under the few-shot continual learning setting. The full \model achieves the best performance in both Average Accuracy (AA) and Average Forgetting (AF), indicating that all components contribute to performance improvement and forgetting mitigation.Removing the \emph{conditional backward transfer} mechanism leads to accuracy degradation and increased forgetting, highlighting the importance of dynamically activating backward updates for knowledge consolidation. Disabling the \emph{gradient modulation} mechanism degrades performance stability, suggesting that coordinating gradient magnitudes across graph and text modalities is critical for preventing cross-modal alignment drift in dual-encoder architectures. Eliminating the \emph{bidirectional gradient projection} component results in performance drops, demonstrating the necessity of modeling both forward adaptation and backward knowledge transfer between old and new tasks. Finally, removing the \emph{category-aware subspace modeling} mechanism reduces accuracy and weakens backward transfer, confirming that class-level gradient subspaces are essential for capturing semantic distinctions and mitigating inter-class interference.
\begin{figure}[h]
    \centering
    % \vspace{-5pt}
    \captionsetup{skip=2pt}
    \begin{subfigure}[t]{0.47\linewidth}
        \centering
        \includegraphics[width=\linewidth]{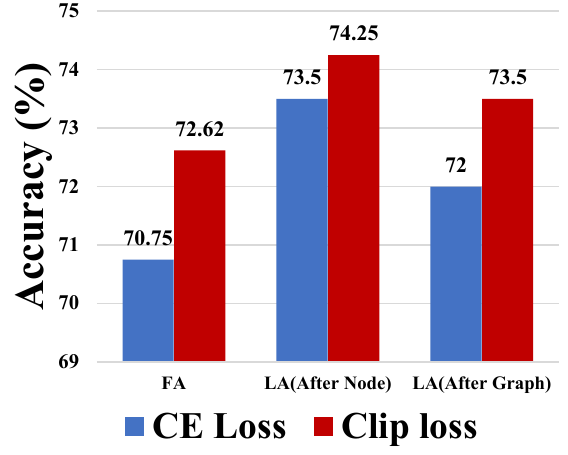}
        \caption{CE vs. Clip}
        \label{fig:wikics_link_reformulation}
    \end{subfigure}
    % --- Subfigure (a) ---
    \begin{subfigure}[t]{0.47\linewidth}
        \centering
        \captionsetup{skip=6pt}
        \includegraphics[width=\linewidth]{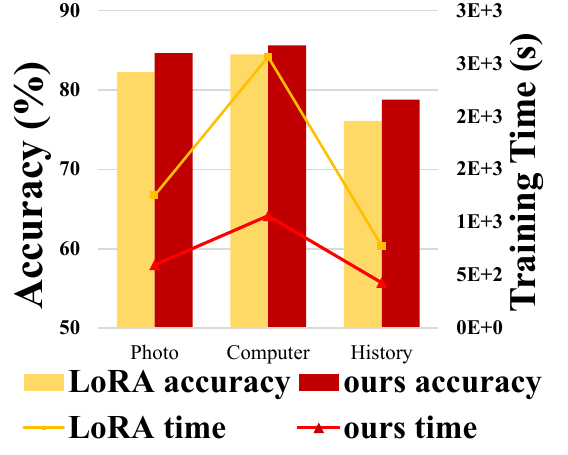} 
        \caption{Performance vs. Time}
        \label{fig:performance_vs_efficiency}
    \end{subfigure}
    \\
    % --- Subfigure (b) ---
    \begin{subfigure}[t]{0.47\linewidth}
        \centering
        \includegraphics[width=\linewidth]{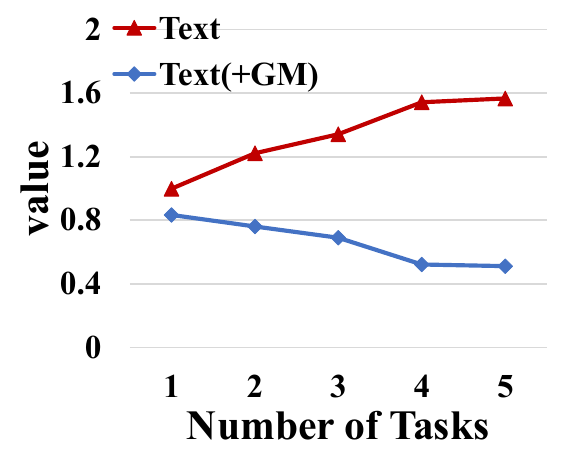}
        \caption{Parameter Variance}
        \label{fig:GM1}
    \end{subfigure}
    \begin{subfigure}[t]{0.47\linewidth}
        \centering
        \includegraphics[width=\linewidth]{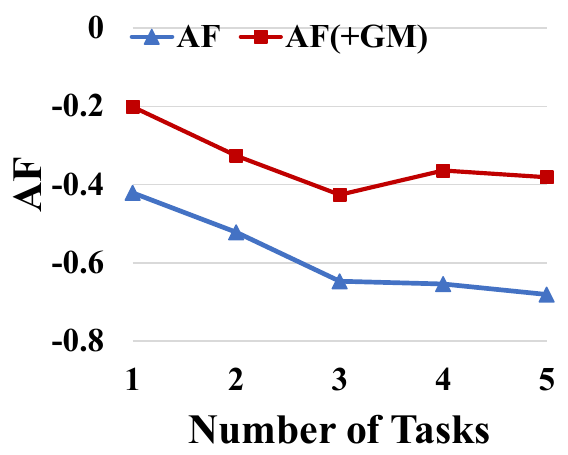}
        \caption{ Forgetting Rate on CL}
        \label{fig:GM2}
    \end{subfigure}
    \caption{(a) Performance comparison between cross-entropy loss and CLIP loss; (b) Comparison of model performance vs. training time; (c) Effect of GM on parameter variability; (d) Effect of GM on forgetting rate.}
    \label{fig:ab}
\end{figure}
\nop{
\textbf{Ablation on Model Components.}Table~\ref{tab:ablation} reports the ablation results under the few-shot continual learning setting. The full model achieves the best performance in terms of both average accuracy (AA) and average forgetting (AF), indicating that all components contribute in a complementary and synergistic manner to improving performance and alleviating forgetting. Removing the dynamic parameter activation leads to lower accuracy and more severe forgetting, suggesting that conditional backward updates are crucial for safe and effective consolidation of previously learned knowledge. Disabling the cross-modal gradient regulation further degrades stability, implying that coordinating gradient magnitudes across different modalities is important for preventing alignment drift in dual-encoder settings. Eliminating the bi-directional projection also results in performance drops, indicating the necessity of modeling bidirectional knowledge transfer between old and new tasks. Finally, removing the class-aware mechanism reduces classification accuracy and weakens backward transfer, demonstrating that class-level subspace modeling is beneficial for capturing fine-grained semantic differences and mitigating inter-class interference in class-incremental scenarios.
}
\noindent{\bf Ablation on Task Reformulation.}
Beyond component-wise ablations, we further investigate the effect of task reformulation, with a particular focus on edge-level prediction. Specifically, we compare the conventional inner-product-based link prediction optimized with cross-entropy loss against our unified graph--text alignment objective based on CLIP-style contrastive learning. Both variants are evaluated under identical encoder architectures, data splits, and training protocols to ensure fair comparison.
As shown in Figure~\ref{fig:wikics_link_reformulation}, reformulating the edge-level task using the alignment-based objective consistently improves both \emph{First ACC}, which measures the initial performance on the edge-level task, and \emph{Last ACC}, which reflects the retained performance after subsequent node-level and graph-level tasks are learned. These results indicate that alignment-based task reformulation yields more stable and transferable representations, enabling the model to better preserve edge-level knowledge under task-incremental learning. Overall, this analysis highlights the importance of unifying downstream task optimization objectives when addressing heterogeneous prediction granularities in continual learning. 

In addition, we also compare our method with GraphPrompt~\cite{liu2023graphprompt} and All in One~\cite{sun2023all}, which also focus on downstream task unification. These methods mainly unify pure graph tasks under the pre-training and fine-tuning paradigm, whereas our method targets continual graph--text learning and further addresses objective drift and task interference.
Table~\ref{tab:unified_graph_task_comparison} shows that our method achieves better AA and AF under the TIL setting.
\begin{table}[t]
\centering
\caption{Comparison with unified graph task learning methods under the TIL setting.}
\label{tab:unified_graph_task_comparison}
\small
\setlength{\tabcolsep}{12pt}
\renewcommand{\arraystretch}{1.15}
\begin{tabular}{lcc}
\toprule
\textbf{Method} & \multicolumn{2}{c}{\textbf{TIL}} \\
\cmidrule(lr){2-3}
 & \textbf{AA} & \textbf{AF} \\
\midrule
GraphPrompt & 59.13 & -17.23 \\
ALL in One & 58.31 & -14.34 \\
Ours & \textbf{76.18} & \textbf{-2.09} \\
\bottomrule
\end{tabular}
\end{table}

\nop{
\noindent \textbf{Ablation on Task Reformulation.} In addition to component-wise ablations, we further analyze the effect of task reformulation, particularly the redefinition of the link prediction task. We compare the traditional inner-product-based prediction computed with cross-entropy loss with our unified graph-text alignment CLIP loss. Both are experimented under identical encoder architectures, data splits, and training protocols. As shown in Figure~\ref{fig:wikics_link_reformulation}, reformulating the edge-level task optimization consistently improves both the First ACC (the initial accuracy of the edge-level task) and the Last ACC (the accuracy of the edge-level task after incremental learning in node-level and graph-level tasks). This indicates that alignment-based downstream task fine-tuning helps improve the model's performance in the TIL scenario.
}
\vspace{-9pt}
\input{table/GT.tex}
\subsection{Efficiency Analysis}
\label{sec:4.5}
Beyond predictive performance, we further evaluate the computational efficiency of \model in comparison with the standard LoRA baseline under the same backbone architectures and training protocols. Figure~\ref{fig:performance_vs_efficiency} illustrates the trade-off between performance and efficiency, where the bar chart reports the final accuracy and the overlaid line plot indicates the corresponding training time. In this visualization, higher accuracy and lower training time jointly reflect better efficiency.
As shown in the results, \model achieves a favorable balance between accuracy and computational cost across multiple datasets. In particular, our method consistently attains state-of-the-art predictive performance while requiring less training time than LoRA-based adaptation. These results demonstrate that the performance gains of \model do not come at the expense of increased computational overhead, but instead offer improved efficiency alongside enhanced continual learning performance.

\nop{
In addition to performance, we also evaluate the computational efficiency of our method against the baseline LoRA. Our analysis is visualized in Figure \ref{fig:performance_vs_efficiency}, where the bars represent final accuracy and the superimposed line plot represents training time. For both metrics, a higher position on the chart indicates a better result. The results reveal that our method achieves a compelling balance between high accuracy and computational efficiency. On multiple datasets, our method not only delivers state-of-the-art predictive accuracy but also exhibits significantly higher training efficiency, achieving better performance compared to the LoRA method.
}

\subsection{Effectiveness of Gradient Modulation}
\label{sec:4.7}
\nop{To verify the effectiveness of the proposed gradient modulation (GM) mechanism, we analyze parameter update dynamics and forgetting behavior over five sequential continual learning tasks. Specifically, we compare \model with and without GM under identical training settings, examining both modality-specific parameter changes and the corresponding forgetting rates.
As illustrated in Figure~\ref{fig:GM}, without GM, the text encoder exhibits progressively increasing parameter updates across tasks, indicating an imbalanced learning dynamic between modalities. In contrast, incorporating GM effectively stabilizes the update magnitude of the text modality, leading to more synchronized learning dynamics between the graph and text encoders. This stabilization directly translates into reduced forgetting rates across tasks.
These results demonstrate that GM plays a critical role in coordinating cross-modal optimization, preventing excessive adaptation in the text encoder and thereby enhancing the stability and generalization performance of \model in multi-task continual learning scenarios.}

To verify the effectiveness of the gradient modulation (GM) mechanism, we analyze parameter updates and forgetting behavior over five sequential tasks. We compare \model with and without GM under identical training conditions, examining modality-specific parameter changes and forgetting rates. As illustrated in Figure~\ref{fig:GM1} and~\ref{fig:GM2} , without GM, the text encoder exhibits progressively increasing parameter updates across tasks, indicating an imbalanced learning dynamic between modalities. In contrast, GM stabilizes the text modality's update magnitude, resulting in more synchronized learning between graph and text encoders and reduced forgetting rates. These results demonstrate that GM is crucial for coordinating cross-modal optimization, preventing excessive adaptation in the text encoder, and improving \model's stability and generalization in multi-task continual learning.
\subsection{Generalization on Alignment Framework}
\label{sec:4.4}
In this section, we evaluate the generalization of \model across different graph--text alignment frameworks. We integrate \model with representative LLM-as-Aligner approaches, including G2P2~\cite{G2P2} and GraphCLIP~\cite{zhu2025graphclip}, and assess its effectiveness under class-, domain-, and task-incremental learning scenarios. As reported in Table~\ref{tab:Alignment}, incorporating \model improves both AA and AF across alignment frameworks and incremental settings. These gains indicate enhanced performance stability and reduced catastrophic forgetting compared to the corresponding baseline models. Overall, the results demonstrate that \model is not tied to a specific alignment architecture and can be seamlessly integrated into different graph--text alignment frameworks, highlighting its generalization ability and broad applicability.

\nop{
In this section, we evaluate the generalization of our proposed method across various alignment frameworks. Specifically, we test the effectiveness of our approach when integrated with different graph-text alignment frameworks, such as G2P2 \cite{G2P2}, GraphClip \cite{G2P2}. We examine the performance improvements brought by our method in these frameworks under different incremental learning scenarios (CIL, DIL, and TIL). The results, as shown in Table~\ref{tab:Alignment}, indicate that the models incorporating our method achieve improvements in both AA and AF compared to the baseline models. These models demonstrate stronger performance stability and lower forgetting rates. This validates that our graph-text alignment method can effectively enhance the performance of different frameworks, proving its broad applicability.
}

\subsection{Few-Shot Learning with Varying k}
\label{sec:4.6}
\vspace{-1mm}
To assess performance under data-scarce conditions, we evaluate \model across few-shot settings, including 1-shot, 3-shot, 5-shot, and 10-shot regimes. Figure~\ref{fig:k-shot} reports the results under varying $k$ values. Across all shot configurations, \model outperforms the baseline SimGCL~\cite{simgcl}.
The performance advantage is pronounced in the more challenging task-incremental learning (TIL) scenario, where heterogeneous task objectives and limited supervision exacerbate catastrophic forgetting. These results indicate that \model exhibits superior few-shot generalization ability and robustness, benefiting from its unified alignment objective and structured gradient control under scarce supervision.
\begin{figure}[h]
    % \vspace{-2pt}
    \centering
    % \vspace{1pt}
    \captionsetup{skip=1pt}
    % --- Subfigure (a) ---
    \begin{subfigure}[t]{0.32\linewidth}
        \centering
        \includegraphics[width=\linewidth]{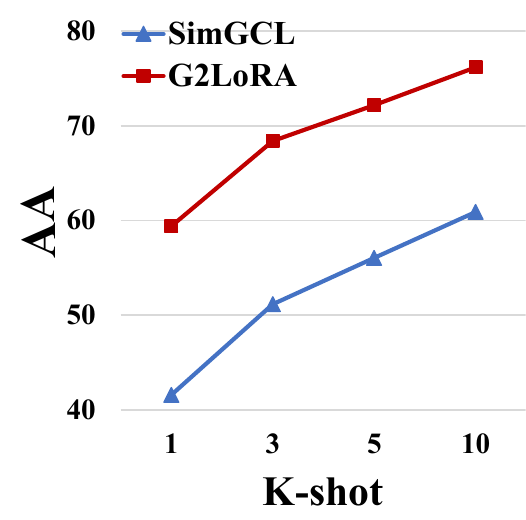} 
        \caption{\footnotesize TIL}
        \label{fig:kshota}
    \end{subfigure}
    % --- Subfigure (b) ---
    \begin{subfigure}[t]{0.32\linewidth}
        \centering
        \includegraphics[width=\linewidth]{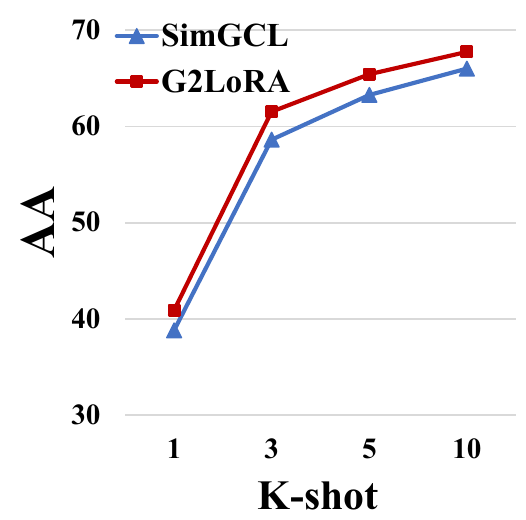}
        \caption{\footnotesize DIL}
        \label{fig:kshotb}
    \end{subfigure}
    % --- Subfigure (b) ---
    \begin{subfigure}[t]{0.32\linewidth}
        \centering
        \includegraphics[width=\linewidth]{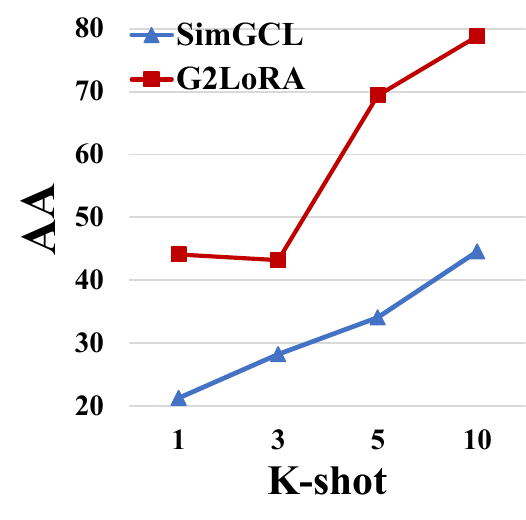}
        \caption{\footnotesize CIL}
        \label{fig:kshotc}
    \end{subfigure}
    \caption{Performance comparison of SimGCL and \model under varying K-shot configurations: (a) TIL, (b) DIL , (c) CIL.}
    \label{fig:k-shot}
\end{figure}

\subsection{Hyperparameter Sensitivity Analysis}
\label{Hyperparameter}
We analyze the sensitivity of \textbf{\model} to the bidirectional activation threshold \( \epsilon_1 \) and the regularization coefficient \( \lambda \). As shown in Figure~\ref{fig:Hyperparameter}, \textbf{\model} maintains stable performance across different values of \( \epsilon_1 \) in TIL, DIL, and CIL, indicating that the activation mechanism is robust. In CIL, a relatively larger \( \epsilon_1 \) is preferred since tasks from the same graph domain tend to produce higher projection magnitudes, enabling more effective activation of relevant old tasks. For TIL and DIL, smaller thresholds are more suitable to avoid excessive activation and preserve the balance between stability and plasticity.
\begin{figure}[t]
    \centering
    \captionsetup{skip=2pt}
    \scriptsize

    \begin{subfigure}[t]{0.31\linewidth}
        \centering
        \includegraphics[width=\linewidth]{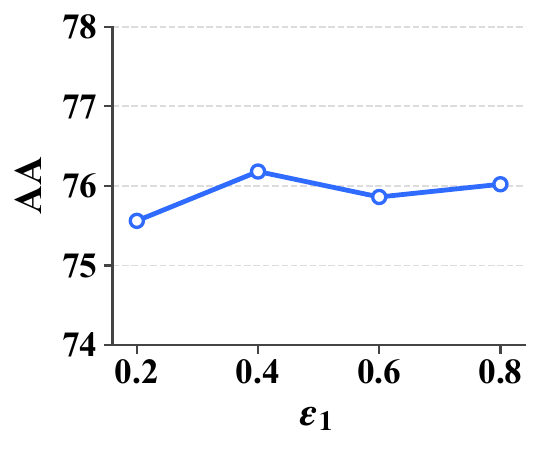}
        \caption{TIL}
        \label{fig:p_til}
    \end{subfigure}
    \hfill
    \begin{subfigure}[t]{0.31\linewidth}
        \centering
        \includegraphics[width=\linewidth]{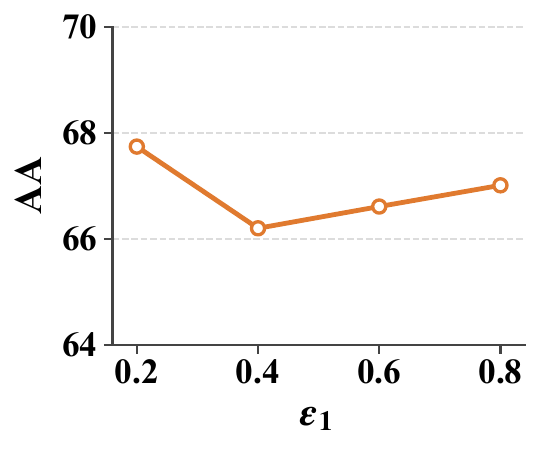}
        \caption{DIL}
        \label{fig:p_dil}
    \end{subfigure}
    \hfill
    \begin{subfigure}[t]{0.31\linewidth}
        \centering
        \includegraphics[width=\linewidth]{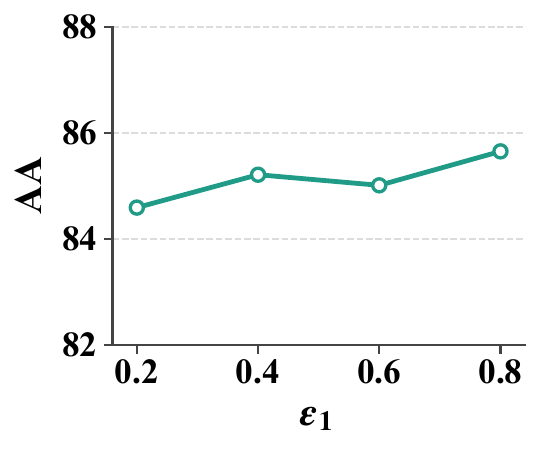}
        \caption{CIL (Computer)}
        \label{fig:p_cil}
    \end{subfigure}

    \vspace{-1mm}
    \caption{Sensitivity analysis of \( \epsilon_1 \) under different incremental learning settings.}
    \label{fig:Hyperparameter}
\end{figure}

The results for \( \lambda \) further show that \model is largely insensitive to the regularization strength. It maintains stable performance across different values and consistently outperforms the best-performing baselines, indicating that the gains are not dependent on a carefully tuned \( \lambda \). Therefore, we use \( \lambda=0.1 \) as the default setting across tasks and datasets.

\begin{table}[t]
\centering
\caption{Sensitivity analysis of \( \lambda \).}
\label{tab:lambda_sensitivity}
\begin{tabular}{lccccc}
\toprule
Dataset & \( \lambda=0.1 \) & \( \lambda=0.5 \) & \( \lambda=0.8 \) & \( \lambda=1.0 \) & Best baseline \\
\midrule
History  & 78.81 & 78.83 & 78.76 & 78.11 & 53.73 \\
Computer & 85.65 & 85.21 & 84.88 & 85.35 & 78.64 \\
\bottomrule
\end{tabular}
\end{table}

%% file: table/table_gaussian_har.tex
% Generate the table at: https://www.tablesgenerator.com/, use "Booktabs table style", center table horizontally, caption above, and scale the table to column width.

\begin{table}[t]
\caption{Ablation results on the Photo dataset under the few-shot continual learning setting (mean accuracy \%).}
\label{tab:ablation}
\centering
\small
\setlength{\tabcolsep}{6pt}
\begin{tabular}{c|cc|cc}
\toprule
\multirow{2}{*}{\textbf{Model}} & \multicolumn{2}{c|}{\textbf{CIL (Photo)}} & \multicolumn{2}{c}{\textbf{DIL}} \\
\cmidrule(lr){2-3} \cmidrule(lr){4-5}
 & \textbf{AA} & \textbf{AF} & \textbf{AA} & \textbf{AF} \\
\midrule
% SimGCL & $0.4455±0.0105$ & $-0.0799±0.0038$ &  &  \\
% GTAlign (InfLoRA) & $0.7708±0.0042$ & $0.0002±0.0086$ &  &  \\
\model(Full model) & \textbf{84.68} & \textbf{-2.03} & \textbf{67.74} & -3.94 \\
\midrule
w/o Parameter Activation & 83.41& -3.10 & 67.06 & \textbf{-3.84} \\
w/o Cross-modal Gradient & 82.71 & -3.19 & 66.48 & -5.47\\
w/o Bi-directional Projection & 83.98 & -3.25 & 67.01 & -4.10\\
w/o Category-aware & 83.60 & -3.92& 66.92 & -4.51 \\
\bottomrule
\end{tabular}
\end{table}

% \midrule
% G2P2  & & & & & & & & & & \\
% G2P2+ (ours) & & & & & & & & & & \\

%% file: table/GT.tex
\begin{table*}[t]
\centering
\caption{Performance Comparison of Graph-Text Alignment Models with and without \model Across Various Incremental Learning Scenarios(mean accuracy \%±std. dev.)}
\label{tab:Alignment}
\resizebox{\textwidth}{!}{
\begin{tabular}{c|cccccc|cc|cc}
\toprule
\multirow{2}{*}{Model}
& \multicolumn{6}{c|}{CIL} 
& \multicolumn{2}{c|}{DIL} 
& \multicolumn{2}{c}{TIL} \\
\cmidrule(lr){2-7} \cmidrule(lr){8-9} \cmidrule(lr){10-11}
& \multicolumn{2}{c}{Photo} 
& \multicolumn{2}{c}{Computer} 
& \multicolumn{2}{c}{History} 
& \multicolumn{2}{|c}{Cora+Citeseer+WikiCS} 
& \multicolumn{2}{|c}{Cora+WikiCS+Photo} \\
\cmidrule(lr){2-3}\cmidrule(lr){4-5}\cmidrule(lr){6-7}\cmidrule(lr){8-9}\cmidrule(lr){10-11}
& AA & AF & AA & AF & AA & AF & AA & AF & AA & AF \\
\midrule
GraphCLIP(+LoRA)  & 52.22±0.60 & -42.73±1.11 & 43.64±2.10  & -27.16±0.67 & 50.18±1.13 & -30.85±1.35 & 66.48±0.28 & -3.05±0.35 & 73.67±1.90  &  -6.94±1.49  \\
GraphCLIP(+\model) 
& \textbf{82.24±0.30} 
& \textbf{-1.48±0.65} 
& \textbf{84.90±0.13} 
& \textbf{-0.64±0.21}
& \textbf{76.74±0.27} 
& \textbf{-1.14±0.31} 
& \textbf{68.63±0.10} 
& \textbf{-1.30±0.20} 
& \textbf{74.65±1.64} 
& \textbf{-2.17±1.90} \\

\midrule
G2P2 & 44.65±0.12 & -65.16±1.55 & 53.19 ± 0.00 & -54.09 ± 0.52 & 49.02±0.00 &-36.03±0.76 & 31.47±1.74 & -50.83±0.54  & 35.68±0.98 &  -36.98±0.85\\
G2P2(+\model)
& \textbf{74.37±1.41} 
& \textbf{-22.80±2.57} 
& \textbf{85.30 ± 0.47} 
& \textbf{-10.73 ± 0.49} 
& \textbf{72.25±0.87} 
& \textbf{-3.61±0.34} 
& \textbf{59.37±0.73} 
& \textbf{-16.00±0.48 } 
& \textbf{72.31±1.25} 
& \textbf{-3.85±0.43} \\
\bottomrule
\end{tabular}
}
\end{table*}

%% file: text/6_related.tex
\section{Related Work} \label{sec:6_related}
\nop{
\noindent{\bf Text-Attributed Graph Methods with LLMs.}
With the rapid advancement of LLMs, recent studies have increasingly explored their integration with text-attributed graphs. Existing approaches can be broadly categorized into three paradigms: 
\textit{(i) LLM-as-Enhancer} methods leverage LLMs to enrich or encode textual node attributes, significantly improving semantic representations over traditional techniques such as Bag-of-Words. For example, TAPE~\cite{he2023explanations} utilizes ChatGPT to enhance node features.  
\textit{(ii) LLM-as-Predictor} methods reformulate graph structures and tasks into textual forms and directly perform prediction using LLMs. Representative approaches include GraphText~\cite{zhao2023graphtext}, which employs a G-Syntax Tree, as well as GraphGPT~\cite{Graphgpt} and LLaGA~\cite{llaga}, which incorporate graph neural networks for structured reasoning.  
\textit{(iii) LLM-as-Aligner} methods align graph and text representations into a shared embedding space through contrastive learning, as exemplified by G2P2~\cite{G2P2} and GraphCLIP~\cite{zhu2025graphclip}. While effective for transfer learning, these methods are primarily designed for static settings and do not explicitly address the challenges of continual graph learning.

\noindent{\bf Graph Continual Learning (GCL).}
GCL aims to enable models to learn a sequence of graph-related tasks while mitigating catastrophic forgetting and facilitating knowledge transfer. Compared with traditional continual learning, GCL faces additional challenges due to evolving graph structures and the need to preserve both semantic and topological information. Existing approaches generally fall into three categories: regularization-based methods that restrict parameter updates~\cite{liu2021overcoming,hoang2023universal}, parameter isolation methods that identify task-specific or critical parameters~\cite{zhang2023continual,zhang2022hierarchical}, and replay-based methods that store and reuse representative samples from previous tasks~\cite{zhang2023ricci,niu2024graph,liu2021overcoming}.  
Despite their effectiveness, replay-based approaches often incur significant storage and privacy overhead. In this work, we focus on replay-free continual learning strategies, which are more practical for large-scale and privacy-sensitive graph applications.

\noindent{\bf Gradient Projection Methods.}
Gradient projection techniques, such as OWM~\cite{zeng2019continual} and GPM~\cite{GPM}, mitigate catastrophic forgetting by projecting task gradients onto orthogonal subspaces, thereby preventing interference between tasks. These methods explicitly separate task-specific parameter updates and preserve previously acquired knowledge during continual learning.  
More recently, InfLoRA~\cite{Inflora} extends gradient projection to parameter-efficient fine-tuning by applying projection constraints to low-rank updates, enabling efficient adaptation with reduced forgetting. However, existing approaches typically treat tasks as atomic units and lack fine-grained control over gradient interactions. In particular, they do not account for structured relationships within the gradient space, nor do they regulate the magnitude and directional balance of parameter updates across modalities, which limits their effectiveness in complex and heterogeneous incremental learning scenarios.}

\textbf{Text-Attributed Graph Methods with LLMs.} 
%\ruijie{let us give a short related work for TAG learning with LLM. Please follow GraphClip.}
With the rapid development of LLMs, research on TAGs has gained significant attention, broadly categorized into three types: \textbf{LLM as Enhancer}, \textbf{LLM as Predictor}, and \textbf{LLM as Aligner}. \textbf{LLM as Enhancer} methods leverage LLMs to enrich or encode textual node attributes, significantly improving semantic representations over traditional techniques such as Bag-of-Words. For example, TAPE \cite{he2023explanations} uses ChatGPT to enhance node attributes. \textbf{LLM as Predictor} predicts graph data by converting it into text, e.g., GraphGPT \cite{Graphgpt} and LLaGA \cite{llaga} employ GNNs. \textbf{LLM as Aligner} aligns graph and text into a shared embedding space, with methods like G2P2 \cite{G2P2}, GraphClip \cite{zhu2025graphclip}, and ADAligner~\cite{liu2025learningnoiseresilienttransferablegraphtext}. However, these methods are generally designed for static TAG scenarios and are not specifically tailored for continual graph learning.

\noindent\textbf{Graph Continual Learning.} GCL addresses the challenges of graph data in multi-task scenarios and knowledge transfer. In addition to preventing semantic information loss, it must also tackle the forgetting of topological patterns due to evolving graph structures. Similar to traditional continual learning, GCL approaches the problem by limiting model parameter changes \cite{hoang2023universal}, isolating critical parameters for previously learned tasks \cite{zhang2023continual, zhang2022hierarchical}, and replaying representative data from previous tasks \cite{zhang2023ricci, niu2024graph}. However, data replay methods face challenges in privacy and storage. This study focuses on replay-free approaches to bypass these issues and propose more practical solutions.

\noindent \textbf{Gradient Projection Methods.} Methods such as OWM \cite{zeng2019continual} and GPM \cite{GPM}, achieve task separation by projecting gradients onto orthogonal subspaces, effectively preventing interference between tasks. These methods maintain independent parameter spaces for each task, ensuring that the learning of new tasks does not overwrite the knowledge of previously learned tasks. InfLoRA \cite{Inflora} extends this concept to PEFT by applying gradient projections to low-rank updates, enabling efficient adaptation and alleviating catastrophic forgetting. However, InfLoRA does not further consider fine-grained control of the gradient space, as well as the magnitude and direction of parameter updates, which could limit its adaptability and performance when dealing with more complex incremental tasks.

%% file: text/8_conclusion.tex
\section{Conclusion} \label{sec:8_conclusion}
\vspace{-1mm}
In this paper, we presented \model, a unified framework for continual graph--text alignment that effectively mitigates task interference under parameter-efficient adaptation. By integrating category-aware subspace modeling with bidirectional gradient projection, \model enables both stable knowledge retention and beneficial backward transfer across class-, domain-, and task-incremental learning settings. 
To further preserve cross-modal consistency and alignment stability, we introduced a dynamic gradient magnitude modulation mechanism that coordinates the optimization dynamics of graph and text encoders, preventing alignment drift during continual training. Extensive experiments on multiple benchmarks demonstrate that \model achieves state-of-the-art performance and strong transferability across different continual learning scenarios and LLM-as-Aligner backbones.

\nop{
In this paper, we presented \model, a unified framework for continual
graph–text alignment that mitigates task interference through
gradient-guided low-rank adaptation. By combining category-aware
subspace modeling with bidirectional gradient projection, our method
enables both forward knowledge preservation and backward knowledge
transfer across class-, domain-, and task-incremental settings.
We further introduced a dynamic gradient-magnitude modulation strategy
to balance the optimization dynamics between graph and text encoders,
thereby stabilizing cross-modal alignment during continual training.
Extensive experiments demonstrate that \model achieves state-of-the-art
performance and strong transferability across multiple continual learning benchmarks and LLM-as-Aligner backbones, validating the effectiveness and generality of our approach.
}

\begin{acks}
This work is partially supported by NSFC program (No. 62225202). Ruijie Wang is supported by the Fundamental Research Funds for the Central Universities.
\end{acks}

%% file: text/Apennew.tex
\appendix
\section{Appendix}
\subsection{Proof of Theorem 1}
\label{pr_theo1}
(1) For Rule 1, we have
\begin{equation}
W^c = W - \alpha \left[g_2(W) - \text{Proj}_{S_1}(g_2(W))\right] = W - \alpha \tilde{g}_2(W). 
\end{equation}

For Rule 2, we have
\begin{equation}
\small
\begin{aligned}
W^k &= W - \alpha \left[g_2(W) - \text{Proj}_{S_1}(g_2(W))\right] \\
    &\quad - \alpha \left[\text{Proj}_{S_2}(g_2(W))\right] = W - \alpha \tilde{g}_2(W) - \alpha \tilde{\tilde{g}}_2(W). 
\end{aligned}
\end{equation}

Based on the smoothness of the objective function, we can have an upper bound on \( F(W^k) \):
\vspace{-0.5mm}
\begin{equation}
\small
\begin{aligned}
\small
\mathcal{F}(\boldsymbol{W}^{k}) & \leq \mathcal{F}(\boldsymbol{W}) + \nabla \mathcal{F}(\boldsymbol{W})^{\top} (\boldsymbol{W}^{k} - \boldsymbol{W}) + \frac{H}{2} \| \boldsymbol{W}^{k} - \boldsymbol{W} \|^2 \\
& = \mathcal{F}(\boldsymbol{W}) + (\boldsymbol{g}_1(\boldsymbol{W}) + \boldsymbol{g}_2(\boldsymbol{W}))^{\top} (-\alpha \boldsymbol{\tilde{g}}_2(\boldsymbol{W}) - \alpha \boldsymbol{\tilde{\tilde{g}}}_2(\boldsymbol{W})) \\
& \quad + \frac{\alpha^2 H}{2} \| \boldsymbol{\tilde{g}}_2(\boldsymbol{W}) + \boldsymbol{\tilde{\tilde{g}}}_2(\boldsymbol{W}) \|^2 \\
& = \mathcal{F}(\boldsymbol{W}) - \alpha \langle \boldsymbol{g}_1(\boldsymbol{W}), \boldsymbol{\tilde{g}}_2(\boldsymbol{W}) \rangle - \alpha \langle \boldsymbol{g}_1(\boldsymbol{W}), \boldsymbol{\tilde{\tilde{g}}}_2(\boldsymbol{W}) \rangle \\
& \quad - \alpha \langle \boldsymbol{g}_2(\boldsymbol{W}), \boldsymbol{\tilde{g}}_2(\boldsymbol{W}) \rangle - \alpha \langle \boldsymbol{g}_2(\boldsymbol{W}), \boldsymbol{\tilde{\tilde{g}}}_2(\boldsymbol{W}) \rangle \\
& \quad + \frac{\alpha^2 H}{2} \| \boldsymbol{\tilde{g}}_2(\boldsymbol{W}) + \boldsymbol{\tilde{\tilde{g}}}_2(\boldsymbol{W}) \|^2 \\
&  \leq  \mathcal{F}(\boldsymbol{W}) - \alpha \| \boldsymbol{\tilde{\tilde{g}}}_2(\boldsymbol{W}) \|^2 - \alpha \| \boldsymbol{\tilde{g}}_2(\boldsymbol{W}) \|^2 -  \alpha \| \boldsymbol{\tilde{\tilde{g}}}_2(\boldsymbol{W}) \|^2 \\
% \alpha \langle \boldsymbol{g}_2(\boldsymbol{W}), \boldsymbol{\tilde{\tilde{g}}}_2(\boldsymbol{W}) \rangle \\
& + \frac{\alpha^2 H}{2} \|\boldsymbol{\tilde{g}}_2(\boldsymbol{W}) \|^2  + \frac{\alpha^2 H}{2} \|\boldsymbol{\tilde{\tilde{g}}}_2(\boldsymbol{W}) \|^2 \\
&  = \mathcal{F}(\boldsymbol{W}) -\left[ \alpha - \frac{\alpha^2 H}{2} \right]\|\boldsymbol{\tilde{g}}_2(\boldsymbol{W}) \|^2-\left[ 2\alpha - \frac{\alpha^2 H}{2} \right]\|\boldsymbol{\tilde{\tilde{g}}}_2(\boldsymbol{W}) \|^2 \\
\end{aligned}
\label{eq:upper_bound}
\end{equation}

and a lower bound on \( F(W^c) \)
\begin{equation}
\begin{aligned}
F(W^c) \geq F(W) + \langle \nabla F(W), W^c - W \rangle - \frac{H}{2} \| W^c - W \|^2.
\end{aligned}
\label{eq:lower_bound}
\end{equation}

Combining Eq.(\ref{eq:upper_bound}) and Eq.(\ref{eq:lower_bound}), it can be shown that:
\vspace{-0.2mm}
\begin{equation}
\small
\begin{aligned}
\mathcal{F}(\boldsymbol{W}^{k})
&\leq \mathcal{F}(\boldsymbol{W}) 
-\left[\alpha-\frac{\alpha^2H}{2}\right]\|\tilde{g}_2(W)\|^2
-\left[2\alpha-\frac{\alpha^2H}{2}\right]\|\tilde{\tilde{g}}_2(W)\|^2 \\
&\leq \mathcal{F}(\boldsymbol{W}^c)
-\langle \nabla \mathcal{F}(\boldsymbol{W}),\boldsymbol{W}^c-\boldsymbol{W}\rangle
+\frac{H}{2}\|\boldsymbol{W}^c-\boldsymbol{W}\|^2 \\
&\quad
-\left[\alpha-\frac{\alpha^2H}{2}\right]\|\tilde{g}_2(W)\|^2
-\left[2\alpha-\frac{\alpha^2H}{2}\right]\|\tilde{\tilde{g}}_2(W)\|^2 \\
&= \mathcal{F}(\boldsymbol{W}^c)
+\alpha\langle g_1(W),\tilde{g}_2(W)\rangle
+\alpha\langle g_2(W),\tilde{g}_2(W)\rangle
+\frac{\alpha^2H}{2}\|\tilde{g}_2(W)\|^2 \\
&\quad
-\left[\alpha-\frac{\alpha^2H}{2}\right]\|\tilde{g}_2(W)\|^2
-\left[2\alpha-\frac{\alpha^2H}{2}\right]\|\tilde{\tilde{g}}_2(W)\|^2 \\
&= \mathcal{F}(\boldsymbol{W}^c)
+\alpha^2H\|\tilde{g}_2(W)\|^2
-\left[2\alpha-\frac{\alpha^2H}{2}\right]\|\tilde{\tilde{g}}_2(W)\|^2 \\
&\overset{(a)}{\leq} \mathcal{F}(\boldsymbol{W}^c)
+(1-\epsilon_1^2)\alpha^2H\|g_2(W)\|^2
-\epsilon_1^2\left[2\alpha-\frac{\alpha^2H}{2}\right]\|g_2(W)\|^2 \\
&\overset{(b)}{\leq} \mathcal{F}(\boldsymbol{W}^c).
\end{aligned}
\end{equation}

\noindent where (a) holds because
\begin{equation}
\begin{aligned}
\| g_2(W) \|^2 &= \| \text{Proj}_{S_1}(g_2(W)) + \tilde{g}_2(W) \|^2 \\
&= \| \text{Proj}_{S_1}(g_2(W)) \|^2 + \| \tilde{g}_2(W) \|^2 \\
&\geq \epsilon_1^2 \| g_2(W) \|^2 + \| \tilde{g}_2(W) \|^2,
\end{aligned}
\end{equation}

\begin{equation}
\begin{aligned}
\| \text{Proj}_{S_2}(g_2(W)) \|^2 &\geq \epsilon_2^2 \| g_2(W) \|^2,
\end{aligned}
\end{equation}

where 
\begin{equation}
\epsilon_2 \leq \epsilon_1, \quad \alpha  \leq H/4
\end{equation}

and (b) is true because
\begin{equation}
\epsilon_1 \geq \sqrt{\frac{2 \alpha H}{4 + \alpha H}}.
\end{equation}

% \subsection{Experimental Setup and Baselines}
\subsection{Datasets}

\input{table/data.tex}
\label{DatasetDetails}
To obtain robust initialization for \model, we pre-train on five diverse source datasets spanning citation, biomedical, e-commerce, and social domains:

\begin{itemize}[leftmargin=8pt]
    \item \textbf{ArXiv-2023}: A directed citation network of Computer Science arXiv papers from the TAPE benchmark, where node texts are paper titles and abstracts, and labels correspond to 40 subject areas~\cite{cora}.

    \item \textbf{OGBN-ArXiv}: A large-scale MAG-indexed citation network of Computer Science papers, with textual paper features and 40 subject categories~\cite{OGBN-ArXiv}.

    \item \textbf{PubMed}: A biomedical citation network on diabetes-related publications, where node texts are scientific abstracts and labels correspond to three diabetes categories~\cite{PubMed}.

    \item \textbf{OGBN-Products}: A large Amazon co-purchasing network, where nodes are products, edges denote co-purchases, and texts are derived from product titles and descriptions~\cite{OGBN-ArXiv}.

    \item \textbf{Reddit}: A social interaction network where nodes represent users, edges indicate response relations, and node features are derived from users' historical posts~\cite{Reddit}.
\end{itemize}

For downstream incremental learning tasks, we evaluate \model on six benchmark TAGs covering citation, Wikipedia, e-commerce, and book domains:

\begin{itemize}[leftmargin=8pt]
    \item \textbf{Cora}: A citation network with 2,708 scientific publications from seven computer science categories, where edges denote citations and node texts are derived from paper titles and abstracts~\cite{cora}.
    
    \item \textbf{Citeseer}: A citation network of 3,186 scientific papers from six domains, with citation edges and textual attributes from titles and abstracts~\cite{citeseer}.
    
    \item \textbf{WikiCS}: A Wikipedia-based graph where nodes are computer science articles, edges are reference links, and node texts come from article content~\cite{wikics}.
    
    \item \textbf{Photo}: An Amazon Electronics co-purchase/co-view graph for photo equipment, where nodes are products and textual attributes are derived from reviews~\cite{photo}.
    
    \item \textbf{Computers}: An Amazon Electronics graph for computer products, with co-purchase/co-view edges and review-based textual attributes~\cite{photo}.
    
    \item \textbf{History}: An Amazon Books graph focusing on history books, where nodes are books, edges reflect co-purchase/co-view relations, and texts are derived from titles and descriptions~\cite{photo}.
\end{itemize}
% \begin{table*}[t]
% \centering
% \caption{Dataset statistics for different incremental learning settings.}
% \label{tab:datacl}
% \resizebox{0.8\textwidth}{!}{
% \begin{tabular}{lcccccccc}
% \toprule
% \multirow{2}{*}{Method} 
% & \multicolumn{1}{c}{DIL} 
% & \multicolumn{3}{c}{CIL} 
% & \multicolumn{1}{c}{TIL} \\
% \cmidrule(lr){2-2} \cmidrule(lr){3-5} \cmidrule(lr){6-6}
% & Cora+Citeseer+WikiCS
% & Photo & Computer & History 
% & Cora+WikiCS+Photo \\
% \hline
% \textbf{Classes} & \textbf{23}  & \textbf{12} & \textbf{10} & \textbf{12} & \textbf{25} \\
% \textbf{Number of Tasks} & \textbf{3} & \textbf{3} & \textbf{3} & \textbf{3} & \textbf{3} \\
% \textbf{Task Classes} & \textbf{[7,6,10]}  & \textbf{[4,4,4]} & \textbf{[4,3,3]} & \textbf{[4,4,4]} & \textbf{[7,2,12]} \\
% \hline
% \end{tabular}
% }
% \end{table*}

\textbf{Different Incremental Learning Settings.} In this Table \ref{tab:datacl}, we present the dataset statistics for different incremental learning settings, covering the DIL (Domain Incremental Learning), CIL (Class Incremental Learning), and TIL (Task Incremental Learning) scenarios. Specifically, in the DIL setting, the dataset Cora+Citeseer+WikiCS contains 23 classes and is divided into 3 tasks, with class distributions of [7, 6, 10]. In the CIL setting, the datasets include Photo, Computer, and History, with 12, 10, and 12 classes, respectively. The task class distributions are [4, 4, 4], [4, 3, 3], and [4, 4, 4]. Finally, in the TIL setting, the Cora+WikiCS+Photo dataset includes 25 classes and 3 tasks, with a task class distribution of [7, 2, 12]. Notably, in this TIL setting, Cora represents node classification, WikiCS corresponds to edge prediction, and Photo is used for graph classification. These statistics provide foundational data for evaluating the performance of incremental learning methods and highlight the class and task distributions for each learning setting.

\begin{table}[H]
\centering
\caption{Dataset statistics under different incremental learning settings.}
\label{tab:datacl}
\small
\resizebox{\columnwidth}{!}{
\begin{tabular}{lccccc}
\toprule
Setting & DIL & \multicolumn{3}{c}{CIL} & TIL \\
\cmidrule(lr){2-2} \cmidrule(lr){3-5} \cmidrule(lr){6-6}
Dataset & Cora+Cite.+WikiCS & Photo & Comp. & History & Cora+WikiCS+Photo \\
\midrule
Classes & 23 & 12 & 10 & 12 & 21 \\
Tasks & 3 & 3 & 3 & 3 & 3 \\
Task classes & [7,6,10] & [4,4,4] & [4,3,3] & [4,4,4] & [7,2,12] \\
\bottomrule
\end{tabular}
}
\end{table}

\subsection{Experimental Setup}
\label{Setup}
\textbf{Baseline Setup}.

For GNN-based methods, since they cannot process raw text directly, we first utilize a pre-trained Sentence-BERT to extract static semantic embeddings from node attributes and feed them as initial node features. We then train these models sequentially on the streaming tasks using the training set of the current session. For regularization-based (EWC) and distillation-based (LwF) baselines, we apply their respective constraints to the loss function during sequential training to mitigate forgetting.

For LLM and GLM-based methods, to eliminate performance variances caused by different foundation models, we adopt LLaMA-3-8b as the unified backbone for all generative and alignment-based baselines where applicable. For discriminative language models like BERT and RoBERTa, we perform sequential fine-tuning on the text classification task. Uniquely for SimpleCIL, consistent with \cite{simplegcl}, we adopt a frozen-backbone strategy: the pre-trained LLM is used strictly as a feature extractor, and only a lightweight projection layer with class prototypes is updated, ensuring high stability.

For fair comparisons, we execute all experiments in a strictly sequential manner without accessing future data. For optimization, we use the AdamW optimizer. To ensure convergence while preventing overfitting, we set the maximum epochs dynamically based on the model type, coupled with an early stopping mechanism. All models are implemented in PyTorch and trained on NVIDIA A100-80G GPUs.

\noindent \textbf{\model Setup}.

\textbf{(1) Pretraining.}
We initialize the \model backbone via a contrastive pre-training stage on the five source datasets described in Appendix A.2. In our main experiments, we use the AdamW optimizer with both the learning rate and weight decay set to \( 1 \times 10^{-5} \). The graph model used is GraphGPS, consisting of 12 layers with a hidden size of 1024. For the text model, we use a fine-tuned version of MiniLM 3, featuring 6 layers with a hidden size of 384. To align both models in a unified subspace, we apply a projector to transform the graph model's 1024 dimensions to match the 384 dimensions of the text model. During pretraining, we only optimize the parameters of the graph model and projector, keeping the text model frozen to reduce training costs and mitigate catastrophic forgetting. Pretraining is conducted over 30 epochs with a batch size of 800 per GPU, utilizing a single A100-80G GPU for pretraining. We will release our pretrained checkpoint after the anonymous phase.

\textbf{(2) Fine-tuning.} 
We use the AdamW optimizer and apply early stopping based on validation performance to prevent overfitting. The batch size is set to 32, and the learning rate is consistent across all baselines, with the optimal parameters obtained by searching within a fixed space. \model is added on the graph side, while the standard LoRA is used for fine-tuning the text side. The specific model parameters will be provided in our code.

\subsection{Notations and Symbols}

\begin{table}[H]
\caption{Notations and Symbols}
\label{tab:notation}
\centering
\small
\setlength{\tabcolsep}{6pt}
\resizebox{\columnwidth}{!}{
\begin{tabular}{c|l}
\toprule
\textbf{Notation} & \textbf{Description} \\
\midrule
$G = (\mathcal{V}, \mathcal{E}, \mathcal{R})$ & Text-Attributed Graph with node, edge, and text sets \\
$\{\mathcal{T}_0, \mathcal{T}_1, ..., \mathcal{T}_{n-1}\}$ & Sequence of continual learning tasks \\
$\mathcal{T}_i = \{G_i, C_i, V_{l_i}\}$ & The $i$-th task with graph, categories, and labeled nodes \\
$\theta_G, \theta_T$ & Parameters of graph and text encoders \\
$\mathbf{W}$ & Model weight matrix \\
$\mathbf{z}_i, \mathbf{t}_i$ & Node embeddings from graph and text encoders \\
$\mathbf{\Lambda}$ & Similarity matrix for graph-text alignment \\
\midrule
$\mathbf{A}_t, \mathbf{B}_t$ & LoRA up- and down-projection matrices \\
$\mathbf{H}_t$ & Intermediate representation matrix \\
$\mathbf{S}_{t,c}$ & Discriminative subspace for category $c$ \\
$\mathbf{M}_t$ & Task-level category-aware subspace \\
$\mathbf{M}_{1:t}$ & Accumulated historical union subspace \\
$\alpha_c$ & Class-wise scaling coefficient \\
$\mathbf{I}_t$ & Constrained update subspace \\
$\mathbf{R}_t$ & Backward transfer subspace \\
$\rho_t$ & Modality-wise learning speed ratio \\
\bottomrule
\end{tabular}}
\end{table}

% \subsection{ Additional Experiments and Analyses}

%% file: table/data.tex
\begin{table}[H]
\centering
\caption{Statistics of downstream datasets.}
\label{tab:datasets}
\scriptsize
\small
\resizebox{\columnwidth}{!}{
\begin{tabular}{lcccccc}
\toprule
\textbf{Dataset} & Cora & Cite. & Photo & Comp. & History & WikiCS \\
\midrule
\textbf{Nodes} & 2,708 & 3,186 & 48,362 & 87,229 & 41,551 & 11,701 \\
\textbf{Edges} & 5,429 & 4,277 & 500,928 & 721,081 & 358,574 & 215,863 \\
\textbf{Classes} & 7 & 6 & 12 & 10 & 12 & 10 \\
\textbf{Domain} & Cit. & Cit. & E-com. & E-com. & E-com. & Web \\
\bottomrule
\end{tabular}
}
\end{table}